\documentclass[11pt, a4paper, copyright, gdm]{google}

\usepackage[authoryear, sort&compress, round]{natbib}

\title{A Diffusion Analysis of Policy Gradient for Stochastic Bandits}
\renewcommand{\today}{2026-03-10}

\author[1]{Tor Lattimore}

\affil[1]{\thepa{}{}}

\uselogo{} 
\correspondingauthor{lattimore@google.com}

\usepackage{tikz}
\usetikzlibrary{positioning}

\usepackage{bbm}
\usepackage[plain]{algorithm}
\usepackage{thmtools} 
\usepackage[capitalize]{cleveref}
\usepackage{mathtools}
\usepackage{listings}
\usepackage{mdframed}
\usepackage{nicefrac}
\usepackage{bm}
\usepackage{inconsolata}
\usepackage{setspace}

\setlist[enumerate]{label=\texttt{(\alph*)},itemsep=0pt,topsep=3pt}

\newcommand{\id}{\operatorname{Id}}

\newcommand{\diag}{\operatorname{diag}}

\newcommand{\sign}{\operatorname{sign}}
\newcommand{\Dmin}{\Delta_{2}}
\newcommand{\Dmax}{\Delta_{k}}
\newcommand{\exps}{e}

\usepackage[textsize=tiny]{todonotes} 

\newcommand{\exl}[1]{\stackrel{\mathclap{\text{\tiny \texttt{\color{red!60!black}(#1)}}}}}
\newcommand{\ex}[1]{\texttt{\color{red!60!black}(#1)}}
\newcommand{\arcsinh}{\operatorname{arcsinh}}

\newcommand{\E}{\mathbb E}

\newcommand{\bbP}{\mathbb P}

\newcommand{\cN}{\mathcal N}

\newcommand{\sF}{\mathcal F}

\newcommand{\norm}[1]{\left\Vert #1 \right\Vert}
\newcommand{\R}{\mathbb R}
\newcommand{\ip}[1]{\langle #1 \rangle}
\renewcommand{\d}[1]{\operatorname{d}\!#1}

\newcommand{\sind}{\bm 1}
\newcommand{\ones}{\bm 1}
\newcommand{\zeros}{\bm 0}

\newcommand{\Reg}{\operatorname{Reg}}

\theoremstyle{plain}
\newtheorem{theorem}{Theorem}
\newtheorem{lemma}[theorem]{Lemma}
\newtheorem{proposition}[theorem]{Proposition}

\theoremstyle{definition}
\newtheorem{definition}[theorem]{Definition}

\newtheorem{remark}[theorem]{Remark}

\crefname{example}{Example}{Examples}

\begin{document}
\begin{abstract}
We study a continuous-time diffusion approximation of policy gradient for $k$-armed stochastic bandits.
We prove that with a learning rate $\eta = O(\Delta^2/\log(n))$ the regret is $O(k \log(k) \log(n) / \eta)$ where $n$ is the horizon 
and $\Delta$ the minimum gap.
Moreover, we construct an instance with only logarithmically many arms for which the regret is linear unless $\eta = O(\Delta^2)$.
\end{abstract}

\maketitle

\section{Introduction}
Policy gradient is a classical and widely used reinforcement learning algorithm \citep{sutton1998reinforcement}.
Our focus is on the dynamics of policy gradient with the softmax policy class on Gaussian bandits.
Even in this simple setup, only the two-armed setting is really properly understood.
We opt for the unusual simplification of studying a continuous-time diffusion approximation
of the policy gradient algorithm.
We believe (but do not prove) that this is a high-quality approximation. This approach comes with a number of advantages.
Most straightforwardly, the randomness arising from the sampling of actions is removed, which slightly simplifies the analysis.
More significantly, by switching to continuous time we can exploit the vast literature on stochastic differential equations.
Needless to say, we hope the proof ideas can be generalised to discrete time eventually, but for now we are happy to settle with simpler analysis
and new means for understanding a delicate problem.

\paragraph{Basic notation}
We let $\ones$ and $\zeros$ denote vectors of all ones and zeros, respectively, and with the dimension always obvious from the context.
The standard basis vectors in $\R^k$ are $e_1,\ldots,e_k$ and
the identity matrix is $\id$. 
Given a vector $x$, $\diag(x)$ is the diagonal matrix with $x$ as its diagonal.
We let $\norm{\cdot}_p$ be the $p$-norm and for positive definite $A$, $\norm{x}_A = (x^\top A x)^{1/2}$.
We let $c, C > 0$ be suitably small/large absolute constants.

\paragraph{Bandit notation}
There are $k$ actions and the horizon is $n$. We assume purely for the sake of simplifying cluttered expressions that $n \geq \max(3, k)$. 
We will consider Gaussian rewards with mean $\mu \in \R^k$ and standard deviation $\sigma \in \R^k$.
We assume that
$1 \geq \mu_\star = \mu_1 > \mu_2 \geq \cdots \geq \mu_k \geq 0$ and that $\norm{\sigma}_\infty \leq 1$. 
We let $\Delta_a = \mu_1 - \mu_a$.
Given a vector $\theta \in \R^k$ we let $\pi(\theta)$ be the softmax policy with $\pi(\theta)_a \propto \exp(\theta_a)$.
In discrete time the learner sequentially selects actions $(A_t)_{t=1}^n$ over $n$ rounds and observes rewards $Y_t = \mu_{A_t} + \sigma_{A_t} \epsilon_t$
where $(\epsilon_t)_{t=1}^n$ is an independent sequence of standard Gaussian random variables.
The regret in the discrete time setting is
\begin{align*}
\Reg_n = n \mu_\star - \sum_{t=1}^n \mu_{A_t}\,,
\end{align*}
where $\mu_\star = \max_{a \in \{1,\ldots,k\}} \mu_a$.
Our main focus, however, is on a continuous time approximation of the discrete process.
Let $(B_t)$ be a standard $k$-dimensional Brownian motion on some probability space and $(\sF_t)$ be its natural filtration.
In the continuous-time bandit problem the learner selects a policy $(\pi_t)$, which is a process in the simplex adapted to the natural filtration associated with the process $(X_t)$ satisfying the
stochastic differential equation (SDE) given by
\begin{align*}
\d{X}_t = \diag(\pi_t) \mu \d{t} + \diag(\sqrt{\pi_t}) \Sigma^{1/2} \d{B}_t \,,
\end{align*}
where $\Sigma = \diag(\sigma^2)$.
The existence of a solution to $(X_t)$ might be hard to resolve for complex policies $(\pi_t)$ but will be straightforward for our choice of a softmax policy
updated using (continuous time) policy gradient.
The regret in continuous time is
$\Reg_n = \int_0^n R_t \d{t}$ with
$R_t = \mu_\star - \ip{\pi_t, \mu}$.
Note, $(B_t)$ is always used for the $k$-dimensional Brownian motion driving the process $(X_t)$. We use $(W_t)$ for \textit{some} standard $1$-dimensional Brownian motion,
but not always the same one.
The classical policy gradient algorithm is given in \cref{alg:pg}, while its continuous time approximation is in \cref{alg:cpg}.

\begin{algorithm}[h!]
\centering
\begin{tikzpicture}
\node[draw,fill=white!90!black,text width=12cm] (b) at (0,0) {
\begin{lstlisting}
args: learning rate $\eta > 0$
let $\theta_1 = \zeros$
for $t = 1$ to $\infty$:
  let $\pi_t = \pi(\theta_t)$
  sample $A_t$ from $\pi_t$ and observe $Y_t \sim \cN(\mu_{A_t}, \sigma_{A_t}^2)$
  update $\forall a\,\, \theta_{t+1,a} = \theta_{t,a} + \eta (\sind_{A_t = a} - \pi_{t,a}) Y_t$ 
\end{lstlisting}
};
\end{tikzpicture}
\caption{
Discrete time policy gradient.
}
\label{alg:pg}
\end{algorithm}

\begin{algorithm}[h!]
\centering
\begin{tikzpicture}
\node[draw,fill=white!90!black,text width=12cm] (b) at (0,0) {
\begin{lstlisting}
args: learning rate $\eta > 0$
let $\theta_0 = \zeros \in \R^k$
for $t \in [0, n]$:
  let $\pi_t = \pi(\theta_t)$
  observe $\d{X}_t = \diag(\pi_t) \mu \d{t} + \diag(\sqrt{\pi_t}) \Sigma^{1/2} \d{B}_t$
  update $\d{\theta}_t = \eta (\id - \pi_t \ones^\top) \d{X}_t$
\end{lstlisting}
};
\end{tikzpicture}
\caption{
Continuous time policy gradient.
}
\label{alg:cpg}
\end{algorithm}

\paragraph{Contribution}
We study the regret of \cref{alg:cpg}.
On the positive side, we show that if $\eta \leq c \Dmin^2 / \log(n)$,
then the regret is at most
\begin{align*}
\frac{C k\log(k)\log(n)}{\eta} \,.
\end{align*}
More negatively, we show that if $\eta = \Omega(\Dmin^2)$, then there exist instances with logarithmically many actions and a very large horizon where the regret is $\Omega(n \Dmin)$.

\paragraph{Related work}
\cite{mei2020global} study policy gradient in discrete time without noise and show $O(1/t)$ convergence.
A similar result in continuous time is provided by \cite{walton2020short}.
\cite{mei2023stochastic} also prove almost sure convergence, but with the assumption that the learning rate is $O(\Dmin^2 / k^{3/2})$.
\cite{mei2024small} show that policy gradient converges to a unique optimal action almost surely with any learning rate
and that asymptotically the regret is $O(\log(t))$ with non-specified problem-dependent constants.
\cite{baudry2025does}
show that setting the learning rate $\eta \leq \Dmin$ is needed for logarithmic regret and that this can fail when $\eta > \Dmin$.
When $k > 2$ they argue that $\eta$ must be $O(1/k)$ and prove that the regret is $O(k^2 \log(n) / \Dmin)$ when $\eta \sim \Dmin / k$ and
all suboptimal arms have the same gap.

\section{Policy gradient in continuous time}
In discrete time the classical policy gradient algorithm specified to $k$-armed bandits
is derived as follows.
The value function is $v(\theta) = \ip{\pi(\theta), \mu}$ and its gradient is
\begin{align}
v'(\theta) = (\diag(\pi) - \pi \pi^\top) \mu \,.
\label{eq:value}
\end{align}
This quantity is not observed, but is classically estimated in round $t$ using Monte-Carlo by
\begin{align*}
\widehat \nabla_t = (\id - \pi_t \ones^\top) e_{A_t} Y_t = (\id - \pi_t \ones^\top) e_{A_t} Y_t \,.
\end{align*}
The policy gradient update with learning rate $\eta$ is then
\begin{align*}
\theta_{t+1} = \theta_t + \eta \widehat \nabla_t \,.
\end{align*}
In continuous time there are two ways to proceed.
One option is to approximate the discrete time stochastic process $(\theta_t)$ by a continuous process with matching variance
and drift. Alternatively, the mechanism for deriving the policy gradient method can be repeated from first principles in continuous
time. \cref{eq:value} remains the same in continuous time. Because the drift of the continuous-time process $(X_t)$ is
$\diag(\pi_t) \mu$, an unbiased continuous-time estimator of $v'(\theta_t)$ is
$(\id - \pi_t \ones^\top) \d{X_t}$.
Then the continuous-time policy gradient algorithm is naturally defined by the stochastic process
\begin{align*}
\d{\theta_t} 
&= \eta (\diag(\pi_t) - \pi_t \pi_t^\top) \mu \d{t} + \eta \diag(\sqrt{\pi_t}) \Sigma^{1/2}\d{B}_t - \eta \pi_t \ip{\sqrt{\pi_t}, \Sigma^{1/2}\d{B}_t} \,. 
\end{align*}
This is equivalent to approximating the discrete time policy gradient algorithm by matching the mean and variance.

\paragraph{Elementary properties}
Before the serious analysis, we start with a collection of elementary results about the vectors $(\theta_t)$ and $(\pi_t)$ defined in
\cref{alg:cpg}.
The proofs either appear inline or in the appendix.

\begin{lemma}[\textsc{conservation}]\label{lem:conserve}
$\sum_{a=1}^k \theta_{t,a} = 0$ almost surely.
\end{lemma}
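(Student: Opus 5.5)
The plan is to show that the scalar process $S_t = \ip{\ones, \theta_t}$ has zero differential, and then use the initial condition $\theta_0 = \zeros$. Taking the inner product of the update in \cref{alg:cpg} with $\ones$ gives
\begin{align*}
\d{S_t} = \eta\, \ones^\top (\id - \pi_t \ones^\top) \d{X}_t = \eta \left(\ones^\top - \ip{\ones, \pi_t} \ones^\top\right) \d{X}_t = 0 \,,
\end{align*}
since $\pi_t$ lies in the simplex, so $\ip{\ones, \pi_t} = 1$ for every $t$ and every sample path. In other words, $\ones^\top(\id - \pi_t\ones^\top)$ is identically the zero row vector, and hence annihilates both the drift and the diffusion parts of $\d{X}_t$.

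To make this rigorous at the level of stochastic integrals, I would write $\theta_t$ in integral form:
\begin{align*}
\theta_t = \eta \int_0^t (\id - \pi_s \ones^\top)\diag(\pi_s)\mu \d{s} + \eta \int_0^t (\id - \pi_s\ones^\top)\diag(\sqrt{\pi_s})\Sigma^{1/2}\d{B}_s \,.
\end{align*}
Both integrands are bounded, because $\pi_s$ lies in the simplex and $\norm{\sigma}_\infty \le 1$, so the integrals are well defined whenever the solution exists. Because $\ones^\top$ is a constant linear functional, it passes inside both the Lebesgue and the It\^o integral. After that, the integrands become $\ones^\top(\id-\pi_s\ones^\top)(\cdots) = 0$ pointwise, so $S_t = S_0 = \ip{\ones,\zeros} = 0$ for all $t$ almost surely. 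Equivalently, using the explicit form of $\d{\theta_t}$ given above, the noise terms cancel: $\ip{\ones, \diag(\sqrt{\pi_t})\Sigma^{1/2}\d{B}_t} = \ip{\sqrt{\pi_t}, \Sigma^{1/2}\d{B}_t}$, and this matches $\ip{\ones,\pi_t}\ip{\sqrt{\pi_t},\Sigma^{1/2}\d{B}_t}$. The drift terms cancel in the same way, because $\ones^\top(\diag(\pi)-\pi\pi^\top)=\pi^\top-\pi^\top=0$.

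There is no real obstacle here. The only point needing care is that the statement presupposes that the strong solution $(\theta_t)$ exists. This follows because the coefficients are smooth functions of $\theta$ with bounded derivatives: the softmax map is globally Lipschitz, and $\sqrt{\pi(\theta)}$ is as well, since $\sqrt{\pi_a} \propto \exp(\theta_a/2)$ is itself a normalised softmax-type quantity. Standard Lipschitz SDE theory then applies. Once existence is granted, the computation above holds for every path, not merely almost surely.
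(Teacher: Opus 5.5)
Your proof is correct and is exactly the paper's argument: the paper simply substitutes the update to get $\ip{\ones, \d{\theta}_t} = 0$, which is your identity $\ones^\top(\id - \pi_t \ones^\top) = 0$ combined with $\theta_0 = \zeros$. Your remark that the coefficients are bounded and globally Lipschitz, so that the solution is well-posed, is a valid point that the paper leaves implicit.
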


\begin{proof}
Substitute the definition of $\d{\theta}_t$ to show that $\ip{\ones, \d{\theta}_t} = 0$.
\end{proof}

The next lemma shows that with high probability $\theta_{t,a}$ never becomes too negative.
This is intuitive since by \cref{lem:conserve}, $\max_{b} \theta_{t,b} \geq 0$ so that $\pi_{t,a} \leq \exp(\theta_{t,a})$. Hence, if $\theta_{t,a}$ becomes very negative, then
$\pi_{t,a}$ becomes vanishingly small and the updates by policy gradient to $\theta_{t,a}$ become negligible.

\begin{lemma}\label{lem:bounds}
Suppose that $\eta \leq 1$. Then, for all actions $a \in \{1,\ldots,k\}$ and $\delta \in (0,1)$,
\begin{align*}
\bbP\left(\inf_{t \leq n} \theta_{t,a} \leq -\log\left(n/\delta\right)\right) \leq \delta \,.
\end{align*}
\end{lemma}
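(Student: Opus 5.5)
The plan is to study the scalar process $Z_t = \exp(-\theta_{t,a})$ and show that it is, up to a controlled drift, a nonnegative supermartingale. Doob's maximal inequality then gives the bound.

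\textbf{The SDE for $\theta_{t,a}$.} Reading off the $a$-th coordinate of the update in \cref{alg:cpg},
\begin{align*}
\d{\theta}_{t,a} = \eta \pi_{t,a}(\mu_a - \ip{\pi_t,\mu})\d{t} + \eta \d{M}_t \,,
\end{align*}
where $M$ is a continuous local martingale. Its quadratic variation is
\begin{align*}
\d{\langle M\rangle}_t = \big((1-\pi_{t,a})^2 \pi_{t,a}\sigma_a^2 + \pi_{t,a}^2 \textstyle\sum_{b\neq a}\pi_{t,b}\sigma_b^2\big)\d{t} \leq 2\pi_{t,a}\d{t} \,,
\end{align*}
using $\norm{\sigma}_\infty \le 1$. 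Since $\mu \in [0,1]^k$, the drift is at least $-\eta \pi_{t,a}$.

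\textbf{Applying It\^o's formula.} With $f(x) = e^{-x}$,
\begin{align*}
\d{Z}_t = Z_t\Big(-\d{\theta}_{t,a} + \tfrac{1}{2}\d{\langle \theta_{\cdot,a}\rangle}_t\Big) \,,
\end{align*}
so the drift of $Z_t$ is at most $Z_t \pi_{t,a}(\eta + \eta^2) \le 2\eta Z_t \pi_{t,a}$, using $\eta\le1$. The key observation is that $Z_t\pi_{t,a} = e^{-\theta_{t,a}}\pi_{t,a} = 1/\sum_b e^{\theta_{t,b}}$. By \cref{lem:conserve}, $\max_b\theta_{t,b}\ge 0$, so this quantity is at most $1$. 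Hence $Z_t - 2\eta t$ is a local supermartingale, and $Z_0 = 1$.

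\textbf{Localisation and Doob.} Nonnegativity requires care: $Z_t + 2\eta(n-t)$ is a nonnegative local supermartingale on $[0,n]$, hence a true supermartingale by Fatou. Its initial value is $1 + 2\eta n$. The maximal inequality for nonnegative supermartingales then gives
\begin{align*}
\bbP\Big(\sup_{t\le n} Z_t \ge n/\delta\Big) \le \frac{(1+2\eta n)\delta}{n} \le 3\delta \,.
\end{align*}

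\textbf{Removing the constant.} The only real obstacle is the constant $3$. To remove it, I would sharpen the drift bound. The actual drift is $\eta Z_t\pi_{t,a}\big(\ip{\pi_t,\mu}-\mu_a + \tfrac{\eta}{2}\cdot(\text{variance})\big)$, and the variance term is at most $\pi_{t,a}(1-\pi_{t,a})^2 + \pi_{t,a}^2(1-\pi_{t,a})$, that is, $(1-\pi_{t,a})\pi_{t,a}$. The drift is therefore at most $\eta(1-\pi_{t,a})\cdot\frac{1}{\sum_b e^{\theta_{t,b}}}(1+\eta/2)$.

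This may still fall short of exactly $\delta$ when $\eta = 1$. If it does, I would apply Doob to a slightly modified process instead, for example $Z_t$ in place of $\exp(-\theta_{t,a}) - 1$, or exploit the ratio $n/\delta$ against the $1+2\eta n$ factor. Alternatively I would accept replacing $\log(n/\delta)$ by $\log(3n/\delta)$, with the paper's convention absorbing such constants. The structure of the argument is the $Z_t\pi_{t,a}\le 1$ observation combined with the supermartingale maximal inequality.
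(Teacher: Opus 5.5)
Your argument is the paper's. The paper also takes $M_t=\exp(-\theta_{t,a})$, uses \cref{lem:conserve} (via $\max_b\theta_{t,b}\ge 0$) to get $M_t\pi_{t,a}\le 1$, bounds the drift by the constant $\eta+\eta^2/2$, and finishes with a maximal inequality. Your localisation and Fatou step is more careful than the paper's appeal to ``Markov's inequality'', and you also handle general $\sigma$.

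The paper loses the same constant you lose. Its bound is $\delta(1/n+\eta+\eta^2/2)$, which it reconciles with $\delta$ only by ``naively simplifying $\eta\le c$''. That requires $\eta+\eta^2/2\le 1-1/n$, so the paper does not literally cover $\eta\le 1$ either. Your final paragraph does not close the gap. With any constant drift bound $D$ the maximal inequality gives $\delta(1/n+D)$, so no rearrangement of $n/\delta$ against $1+Dn$ helps once $D>1-1/n$. Your refined bound still gives $D=3/2$ at $\eta=1$. Shifting $Z_t$ by a constant either changes nothing or destroys nonnegativity, and passing to $\log(3n/\delta)$ changes the statement.

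The missing idea is a sharper use of conservation. Instead of $\max_b\theta_{t,b}\ge 0$, use Jensen: $\sum_b e^{\theta_{t,b}}\ge k\exp\bigl(\tfrac1k\sum_b\theta_{t,b}\bigr)=k$. Your refined drift bound for $Z_t$ is $(\eta+\eta^2/2)(1-\pi_{t,a})/\sum_b e^{\theta_{t,b}}$.

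For $k\ge 3$, the factor $(1-\pi_{t,a})/\sum_b e^{\theta_{t,b}}$ is at most $1/k\le 1/3$. For $k=2$, conservation forces $\theta_{t,b}=-\theta_{t,a}$ for $b\neq a$. With $u=e^{\theta_{t,a}}$ the factor equals $u/(1+u^2)^2\le 3\sqrt3/16<1/3$.

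Hence for $\eta\le 1$ the drift of $Z_t$ is at most $1/2$. So $Z_t+\tfrac12(n-t)$ is a nonnegative supermartingale on $[0,n]$ started at $1+n/2$, and
\[
\bbP\Bigl(\sup_{t\le n}Z_t\ge n/\delta\Bigr)\le\delta\Bigl(\frac1n+\frac12\Bigr)\le\delta
\]
for $n\ge 2$. This is the lemma exactly as stated.
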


The standard behaviour of an effective bandit algorithm is that suboptimal arms are slowly eliminated as they are identified as being statistically
suboptimal. For randomised algorithms this roughly means that $1/\pi_{t,1}$ is a measure of the number of actions that still appear competitive. 
The following lemma relates this quantity to the magnitude of $\theta_{t,1}$ showing that as $\theta_{t,1}$ grows, the number of arms that appear competitive
reduces:

\begin{lemma}\label{lem:pi}
Suppose that $\theta_{t,a} \in [-\log(n/\delta), k \log(n/\delta)]$ for all $a$ and $\theta_{t,1} \geq \max_{a} \theta_{t,a} - 1$. Then
\begin{align*}
\frac{1}{\pi_{t,1}} \leq  \frac{6k \log(n/\delta)}{\theta_{t,1} + 1 + \log(n/\delta)} \,.
\end{align*}
\end{lemma}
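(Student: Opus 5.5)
Write $L = \log(n/\delta)$. The plan is to bound $1/\pi_{t,1} = \sum_{a} \exp(\theta_{t,a} - \theta_{t,1})$ by a purely deterministic optimisation argument. No stochastic calculus is needed: the statement is a pointwise inequality about a vector $\theta \in \R^k$ satisfying the hypotheses together with $\sum_a \theta_a = 0$, which holds by \cref{lem:conserve}. Fix $\theta_1$ and view $\theta_2,\ldots,\theta_k$ as free variables subject to:
\begin{itemize}
\item the box constraints $\theta_a \in [-L, \theta_1 + 1]$, where the upper bound comes from $\theta_1 \geq \max_a \theta_a - 1$;
\item the linear constraint $\sum_{a \geq 2} \theta_a = -\theta_1$.
\end{itemize}
The objective $\sum_{a \geq 2} \exp(\theta_a - \theta_1)$ is convex in these variables. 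Hence its maximum over this polytope is attained at a vertex. At a vertex, all but at most one coordinate sit at an endpoint $-L$ or $\theta_1 + 1$.

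Next I count how many coordinates can sit at the top. Let $m$ be the number of coordinates at $\theta_1 + 1$. The sum constraint together with the lower bound $-L$ on the others gives
\begin{align*}
m(\theta_1 + 1) - (k-1-m) L \leq -\theta_1 + L + \theta_1 + 1 ,
\end{align*}
where the right-hand side includes slack for the one fractional coordinate. Rearranging, $m \leq ((k-1)L + L + 1 - \theta_1)/(\theta_1 + 1 + L) \lesssim kL/(\theta_1 + 1 + L)$. The contributions to $1/\pi_{t,1}$ then split as follows:
\begin{itemize}
\item the $a = 1$ term contributes $1$;
\item each top coordinate contributes $e$, and so does the fractional coordinate;
\item each bottom coordinate contributes $\exp(-L - \theta_1)$.
\end{itemize}
Since the sum constraint and $\theta_a \leq \theta_1 + 1$ force $\theta_1 \geq -1 + \text{(something nonnegative)}$, in particular $\theta_1 \geq -1$, each bottom term is at most $e \cdot \delta/n$. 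With $n \geq k$, the total bottom contribution is at most $e$.

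It remains to absorb the additive constants $1 + e + e$ into the main term. I would show that $kL/(\theta_1 + 1 + L)$ is bounded below by an absolute constant. This uses the upper bound $\theta_1 \leq (k-1)L$, which follows from the sum constraint with all other coordinates $\geq -L$. It also uses $L \geq \log 3 > 1$, which gives $kL/(\theta_1 + 1 + L) \geq kL/(kL + 1) \geq c$.

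The main (only) obstacle is bookkeeping the constants so that everything fits under the stated factor $6$. If the crude bounds do not quite give $6$, I would instead sharpen the constant absorption. One option is to argue that $1 + e$ plus the fractional term can be charged against $m + 1$ top-type terms, using $e^x \leq e$ on the interval. Another is to note that when $\theta_1$ is near its maximum $(k-1)L$ all others are pinned at $-L$, so $m = 0$ and the bound is trivially $1 + ke^{-L-\theta_1} \leq 2$, which is at most $6kL/(kL + 1 + L)$.
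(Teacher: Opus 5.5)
Your reduction is sound: fix $\theta_1$ and maximise the convex function $\sum_{a\ge2}e^{\theta_a-\theta_1}$ over the box $[-L,\theta_1+1]^{k-1}$ cut by $\sum_{a\ge2}\theta_a=-\theta_1$. It is the discrete counterpart of the paper's proof, which relaxes to functions $f:[0,k]\to[a,b]$ with zero integral and uses a two-valued maximiser.

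The gap is in the step you call bookkeeping. With your accounting the constant $6$ is out of reach, and neither suggested fix repairs it. Drop $t$ and put $D=\theta_1+1+L$.

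\begin{itemize}
\item For $\theta_1$ slightly below its maximum $(k-1)L$, the target $6kL/D$ is close to $6kL/(kL+1)<6$.
\item There the maximising vertex has $m=0$, one coordinate strictly inside $(-L,\theta_1+1)$, and the rest at $-L$.
\item You charge $1$ for $a=1$, $e$ for the interior coordinate, and up to $e$ for the bottom ones, i.e.\ up to $1+2e\approx 6.44$. Even the sharper bottom total $(k-2)e\delta/n$ still overshoots when, say, $n=k$ is large and $\delta$ is close to $1$.
\item Your second fix covers only the single point $\theta_1=(k-1)L$. For any smaller $\theta_1$ an interior coordinate exists and is still charged $e$, although its true contribution is tiny.
\item Your first fix (recharging $a=1$ and the interior coordinate as top-type terms) still pays at least $1+e$ for them at $m=0$, on top of the bottom total.
\end{itemize}
As written, the argument gives the inequality only with a larger absolute constant. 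That would be enough for the $O(\cdot)$ in \cref{thm:upper}, but it is not the stated lemma.

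The missing idea is to drop vertex enumeration and use the chord bound coordinatewise; this is exactly what the paper's continuous relaxation does. For $x\in[-L,\theta_1+1]$ let $\lambda=(x+L)/D\in[0,1]$. Convexity gives $e^{x-\theta_1}\le\lambda e+(1-\lambda)e^{-L-\theta_1}$. Summing over $a\ge2$ and using $\sum_{a\ge2}\theta_a=-\theta_1$ gives $\sum_a\lambda_a=((k-1)L-\theta_1)/D$ and $\sum_a(1-\lambda_a)\le k(\theta_1+1)/D$. Hence
\[
\frac{D}{\pi_1} \le D + e\left((k-1)L-\theta_1\right) + k e^{-L}(\theta_1+1)e^{-\theta_1} \le e(k-1)L + L + e + 1 < (ek+2)L \le 6kL \,.
\]
This uses $(1-e)\theta_1\le e-1$ and $(\theta_1+1)e^{-\theta_1}\le 1$ (both from $\theta_1\ge-1$), $ke^{-L}=k\delta/n\le 1$ (from $n\ge k$), and $L>\log 3>1$.

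So the two losses in your accounting are exactly the ones to remove. The interior coordinate must be split between the two endpoints rather than charged $e$. The low coordinates must keep their factor $e^{-\theta_1}$ rather than being bounded by $e\delta/n$.

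Separately, the right-hand side of your displayed count simplifies to $L+1$, so it gives $m\le(kL+1)/D$, not the bound containing $-\theta_1$. The latter is true anyway, since the exact count gives $m\le((k-1)L-\theta_1)/D$.
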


The conditions in \cref{lem:pi} are interesting. The first holds with high probability by \cref{lem:conserve,lem:bounds}. Later we will prove the second holds with high probability
when the learning rate is sufficiently small and is equivalent to $\pi_{t,1} / \pi_{t,a} \geq 1/e$ for all $a$.

\section{Upper bounds}

To avoid clutter we assume that the variance matrix is the identity: $\Sigma = \id$. This corresponds to the assumption in the discrete time
setting that the reward distributions for all actions have unit variance. All results in this section still hold when $\Sigma \preceq \id$.
As a warmup, we start with the two-armed setting, which is straightforward and also well-understood in the discrete time setting \citep[Theorem 1]{baudry2025does}.
In fact, our proof follows theirs almost without modification.

\begin{proposition}\label{prop:two}
Suppose that $a = \Dmin / \eta > 1$. Then with $k = 2$,
\begin{align*}
\E[\Reg_n] 
\leq \frac{a}{2 \Dmin} \log\left(1 + \frac{2(a+1)n \Dmin^2}{a^2}\right) + \frac{a^2}{2(a-1) \Dmin} \,. 
\end{align*}
\end{proposition}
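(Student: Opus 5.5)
The plan is to reduce to a one-dimensional diffusion for the logit gap $Z_t = \theta_{t,1} - \theta_{t,2}$ and apply Itô's formula to a test function chosen so that its drift equals the instantaneous regret exactly. With $k=2$, $\Sigma = \id$ and $p_t = \pi_{t,1}\pi_{t,2}$, I would first compute the differential of $Z_t$ from the SDE for $\theta_t$.

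For the drift, the $a$-th component of $(\diag(\pi)-\pi\pi^\top)\mu$ is $\pi_a(\mu_a - \ip{\pi,\mu})$, so the difference of the two components is $\pi_1\pi_2\Dmin + \pi_2\pi_1\Dmin = 2p_t\Dmin$. For the noise, the difference of the components is $2\pi_2\sqrt{\pi_1}\d{B}_{t,1} - 2\pi_1\sqrt{\pi_2}\d{B}_{t,2}$, whose quadratic variation rate is $4\pi_1\pi_2(\pi_1+\pi_2) = 4p_t$. Hence
\begin{align*}
\d{Z}_t = 2\eta p_t \Dmin \d{t} + 2\eta\sqrt{p_t}\d{W}_t\,,
\end{align*}
with $\pi_{t,2} = 1/(1+e^{Z_t})$, $Z_0 = 0$ and $R_t = \Dmin \pi_{t,2}$. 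The generator is $\mathcal L f = 2\eta p(\Dmin f' + \eta f'')$.

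Next I would solve $\mathcal L f = \Dmin \pi_2$ exactly. Dividing by $2\eta\Dmin p$ gives
\begin{align*}
f' + \frac{f''}{a} = \frac{1}{2\eta\pi_1} = \frac{1+e^{-z}}{2\eta}\,.
\end{align*}
A particular solution is $f(z) = \frac{z}{2\eta} - \frac{a}{2\eta(a-1)}e^{-z}$, which is where $a>1$ is used. Itô's formula then gives $f(Z_n) - f(0) = \Reg_n + M_n$ for a local martingale $M$. Because $f(0) = -a/(2\eta(a-1))$ and the exponential term of $f(Z_n)$ is nonpositive, taking expectations yields
\begin{align*}
\E[\Reg_n] \leq \frac{\E[Z_n]}{2\eta} + \frac{a}{2\eta(a-1)} = \frac{a\,\E[Z_n]}{2\Dmin} + \frac{a^2}{2(a-1)\Dmin}\,.
\end{align*}
This already produces the second term of the claim.

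To bound $\E[Z_n]$, I would apply the generator to $g(z) = e^{z}$. This gives $\mathcal L g = 2\eta p(\Dmin+\eta)e^{Z} = 2\eta(\Dmin+\eta)\pi_1^2 \leq 2\eta(\Dmin+\eta)$, using $p e^{Z} = \pi_1^2$. Therefore
\begin{align*}
\E[e^{Z_n}] \leq 1 + 2\eta(\Dmin+\eta)n = 1 + \frac{2(a+1)n\Dmin^2}{a^2}\,,
\end{align*}
and Jensen's inequality gives $\E[Z_n] \leq \log(\E e^{Z_n})$, which is the logarithmic term.

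The main technical obstacle is justifying that the stochastic integrals have zero mean. The integrand $2\eta\sqrt{p}\,f'(Z)$ contains $\sqrt{p}\,e^{-Z}\approx e^{-Z/2}$, which is unbounded as $Z\to-\infty$, and $\sqrt{p}\,e^{Z}$ is unbounded as $Z\to+\infty$. I would localize with the stopping times $\tau_m = \inf\{t : |Z_t| \geq m\}\wedge n$, on which all integrands are bounded so the identities hold exactly with $n$ replaced by $\tau_m$. Then I would let $m\to\infty$, using monotone convergence for $\int_0^{\tau_m} R_t\d{t}$. For $\E[e^{Z_{\tau_m}}]$ I would use Fatou, or equivalently note that $e^{Z}$ is a submartingale with bounded drift, so the limit bound persists. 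Since $Z_{\tau_m}\to Z_n$, the bound on $\E[Z_n]$ also passes to the limit, with uniform integrability coming from the exponential moment bound.
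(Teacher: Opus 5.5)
Your proof is correct and is essentially the paper's argument. Your test function $f(z) = z/(2\eta) - \frac{a}{2\eta(a-1)}e^{-z}$ is exactly the linear combination of the two processes the paper uses after splitting $\Reg_n = \Dmin\int_0^n \pi_{t,1}(1-\pi_{t,1})\d{t} + \Dmin\int_0^n (1-\pi_{t,1})^2\d{t}$: $Z_t$, whose drift gives the first term, and $e^{-Z_t} = (1-\pi_{t,1})/\pi_{t,1}$, whose negative drift bounds the second. The bound on $\E[Z_n]$ via $e^{Z_t}$ and Jensen is identical, and your localization via $\tau_m$ fills in an integrability step the paper leaves implicit (the bound at $\tau_m$ plus monotone convergence of the regret already suffices).
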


\begin{proof}
Abbreviate $\pi_t = \pi_{t,1}$.
Starting with the same regret decomposition as proposed by \cite{baudry2025does}, the regret is
\begin{align}
\Reg_n = \Dmin \int_0^n (1 - \pi_t) \d{t} 
= \Dmin \int_0^n \pi_t(1-\pi_t) \d{t} + \Dmin \int_0^n (1 - \pi_t)^2 \d{t}\,.
\label{eq:decomp}
\end{align}
Let $Z_t = \theta_{t,1} - \theta_{t,2}$ and
$M_t = \exp(b Z_t)$. By It\^o's rule,
\begin{align*}
\E[M_n] - 1
&= \E\left[\int_0^n \pi_t(1-\pi_t) M_t (2b \eta \Dmin + 2b^2 \eta^2)\d{t}\right] 
= \E\left[\int_0^n \pi_t(1-\pi_t) M_t \Dmin^2 (2b / a + 2b^2/a^2) \d{t}\right]\,.
\end{align*}
When $b = 1$, then $M_t = \pi_t / (1 - \pi_t)$ and in this case
\begin{align}
\E[M_n] = 1 + \E\left[\int_0^n \pi_t^2 \Dmin^2 (2/a + 2/a^2)\right] \leq 1 + \left(2/a + 2/a^2\right) n \Dmin^2  \,.
\label{eq:M1}
\end{align}
Therefore, since $(Z_t)$ has drift $2 \eta \Dmin \pi_t(1-\pi_t)$ and $Z_0 = 0$,
\begin{align}
\Dmin \E\left[\int_0^n \pi_t(1-\pi_t) \d{t}\right]
&= \frac{\E[Z_n]}{2\eta} 
\leq \frac{\log(\E[M_n])}{2\eta} 
\leq \frac{a}{2\Dmin} \log\left(1 + \frac{2(a+1)n\Dmin^2}{a^2} \right) \,,
\label{eq:Mn1}
\end{align}
where the inequalities follow from Jensen, \cref{eq:M1} and because $a = \Dmin / \eta$.
For the other term in the decomposition, let $b = -1$ so that $M_t = (1-\pi_t) / \pi_t$ and
\begin{align*}
\E[M_n] - 1 = \E\left[\int_0^n (1 - \pi_t)^2 \Dmin^2(2/a^2 - 2/a)\right] \,.
\end{align*}
Since $M_n$ is non-negative and $a > 1$,
\begin{align}
\Dmin \E\left[\int_0^n (1 - \pi_t)^2 \d{t}\right] \leq \frac{a^2}{2(a-1) \Dmin} \,.
\label{eq:Mn2}
\end{align}
Combining \cref{eq:Mn1,eq:Mn2} with \cref{eq:decomp} yields the bound.
\end{proof}

\begin{remark}
As $n \to \infty$ and by choosing $a \to 1$ very slowly one obtains the regret for the (non-anytime) algorithm
of $\E[\Reg_n] \sim \frac{\log(n)}{2 \Dmin}$, which appears to contradict the lower bound by \cite{LR85}.
The reason there is no contradiction is that the learning rate depends on $\Dmin$ and in this scenario the
classical lower bound does not apply and the $\frac{\log(n)}{2 \Dmin}$ regret is optimal \citep{garivier2016explore}.
\end{remark}

The argument above does not generalise to the situation where there are more than two actions.
Not only is the problem more difficult to analyse, the learning rate needs to be considerably smaller and the regret is correspondingly much worse.
We will prove the following:

\begin{theorem}\label{thm:upper}
Suppose that $\eta \leq \frac{\Dmin^2}{8\log(2n^2)}$, then $\E[\Reg_n] = O\left(\frac{k \log(k) \log(n)}{\eta} \right)$.
\end{theorem}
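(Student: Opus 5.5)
The plan has three steps. First, on a high-probability event, show that arm 1 always stays close to the leader, so that \cref{lem:pi} applies. Second, bound the regret by the growth of $\theta_{t,1}$. Third, control $\E[\theta_{n,1}]$ through its drift.

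\textbf{Step 1: a good event.} Let $\delta = 1/n$ and let $E$ be the event that
\begin{enumerate}
\item $\theta_{t,a} \geq -\log(n/\delta)$ for all $a$ and $t \leq n$;
\item $\theta_{t,1} \geq \theta_{t,a} - 1$ for all $a$ and $t \leq n$.
\end{enumerate}
Part (a) holds with probability at least $1-k\delta$ by \cref{lem:bounds} and a union bound. Combined with \cref{lem:conserve}, it also gives $\theta_{t,a} \leq (k-1)\log(n/\delta)$. For part (b), fix $a \neq 1$ and set $Z_t = \theta_{t,a} - \theta_{t,1}$. From the SDE,
\begin{align*}
\d{Z}_t = -\eta\Delta_a \pi_{t,a}\d{t} + \eta(\sqrt{\pi_{t,a}}\d{B}_{t,a} - \sqrt{\pi_{t,1}}\d{B}_{t,1}) + \eta(\pi_{t,1}-\pi_{t,a})\Delta\text{-free terms}.
\end{align*}
More precisely, the $\pi_t\ip{\sqrt{\pi_t},\d{B}_t}$ terms contribute $-\eta(\pi_{t,a}-\pi_{t,1})\ip{\sqrt{\pi_t},\d{B}_t}$. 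The quadratic variation rate is therefore at most of order $\eta^2(\pi_{t,1}+\pi_{t,a})$, and the drift is $-\eta\Delta_a\pi_{t,a}$. While $Z_t \geq -1$ we have $\pi_{t,1} \leq e\,\pi_{t,a}$, so the drift dominates in the sense of the exponential supermartingale $\exp(\lambda Z_t)$ with $\lambda = \Delta_a/(C\eta)$. This needs $\lambda \eta \Delta_a\pi_{t,a} \geq \tfrac12\lambda^2\eta^2 C'\pi_{t,a}$, which holds for $\lambda \approx \Delta_a/\eta$.

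A maximal inequality then bounds the probability of crossing the level $+1$ by roughly $\exp(-c\Delta_a/\eta)$ per excursion. The paper's condition $\eta \leq \Dmin^2/(8\log(2n^2))$ suggests the cleaner route is a time-change: write the martingale part as $W_{\tau_t}$ with $\tau_t \leq 2\eta^2\int\pi$, and use $\int \pi_{t,a}\d{t}$ as the clock. The event of reaching $+1$ then requires $\sup_s (W_s - \Delta_a s/(2\eta))$ to exceed a constant. That is exponentially distributed with rate $\Delta_a/\eta$ after rescaling, giving probability $\exp(-c\Delta_a^2/\eta^2)$-type bounds. Applying a union bound over $k$ arms, with the $\log(2n^2)$ in the condition absorbing this, gives $\bbP(E^c)\le 2k/n$. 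On $E^c$ the regret is at most $n$, so this contributes $O(k)$.

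\textbf{Step 2: regret via $\theta_{t,1}$.} The drift of $\theta_{t,1}$ is
\begin{align*}
\eta\pi_{t,1}(\mu_1 - \ip{\pi_t,\mu}) = \eta\pi_{t,1}R_t.
\end{align*}
Hence
\begin{align*}
\E\Big[\int_0^n \pi_{t,1}R_t\d{t}\Big] = \E[\theta_{n,1}]/\eta \leq k\log(n^2)/\eta,
\end{align*}
using the upper bound on $\theta$ from Step 1 together with an integrability bound on $E^c$. To get $\log k$ instead of $k$, I would instead weight the regret by $1/\pi_{t,1}$. On $E$, \cref{lem:pi} gives $R_t \leq \pi_{t,1}R_t\cdot 6k L/(\theta_{t,1}+1+L)$ with $L=\log(n^2)$. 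Thus
\begin{align*}
\int_0^n R_t\d{t} \leq 6kL\int_0^n \frac{\d{\bar\theta}_t}{\eta(\theta_{t,1}+1+L)},
\end{align*}
where $\bar\theta$ is the drift part. Using $f(x)=\log(x+1+L)$ and Itô's rule, $\int \d{\bar\theta}/(\theta+1+L)$ is $f(\theta_{n,1})-f(0)$ plus a martingale plus a second-order correction. The correction is nonnegative and of size $\eta^2\int \pi_{t,1}/(\theta+1+L)^2$, which is negligible against the drift because $\eta\ll\Dmin^2$. Since $\theta_{n,1} \leq kL$, we get $f(\theta_{n,1})-f(0) \leq \log(k+2)$. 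This yields $\E[\Reg_n] \leq C kL\log(k)/\eta + O(k)$, which is the claim.

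\textbf{Main obstacle.} The hard part is Step 1(b): showing arm 1 is never overtaken by more than $1$ over the whole horizon. The drift towards arm 1 is proportional to $\pi_{t,a}$, and so is the noise, so the ratio test is clean only while $Z_t\ge -1$. I would handle this with the time-changed Brownian motion with drift $-\Delta_a/\eta$ and the exact formula $\bbP(\sup_s(W_s-\mu s)\geq x)=e^{-2\mu x}$. Each restart from $Z=-1$ to $Z=0$ costs a fresh excursion, and I would count these restarts carefully; this is plausibly where the $\log(2n^2)$ factor comes from.
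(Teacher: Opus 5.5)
Your Step 2 is essentially the paper's argument: the potential $\psi(u)=6kL\log\frac{u+1+L}{1+L}$ is applied to $\theta_{t,1}$, and \cref{lem:pi} converts $\eta\psi'(\theta_{t,1})\pi_{t,1}R_t$ into $\eta R_t$. Step 1(b), however, has a genuine gap. The drift of $Z_t=\theta_{t,a}-\theta_{t,1}$ is not $-\eta\Delta_a\pi_{t,a}$. By \cref{lem:diff} it is
\begin{align*}
-\eta\pi_{t,a}\Delta_a+\eta(\pi_{t,a}-\pi_{t,1})R_t=-\eta\pi_{t,a}\bigl[\Delta_a-(1-e^{-Z_t})R_t\bigr]\,.
\end{align*}
The term you dropped is exactly the obstruction for $k>2$: once arm $a$ leads arm $1$, it pushes $Z_t$ further up. At $Z_t=1$ the bracket equals $\Delta_a-(1-e^{-1})R_t$. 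This is negative as soon as $R_t>1.6\,\Delta_a$, and nothing prevents $R_t$ from being close to $1$ (mass on arms with gap $1$) while $\Delta_a=\Dmin$ is tiny. So $\exp(\lambda Z_t)$ is not a supermartingale below your level-$1$ barrier, and the crossing probability is not controlled.

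A sanity check shows the argument as written must fail. It never uses $\eta\le c\Dmin^2$: with $\lambda\approx\Delta_a/\eta$ and barrier $1$, it would already work for $\eta\le c\Dmin/\log n$. Combined with Step 2, it would then give polylogarithmic regret on the instance of \cref{thm:lower} with, say, $\Dmin=1/\log^2 n$, $k\approx C\log(n/\Dmin)$ and $\eta\approx\Dmin/\log n$. There the paper proves regret $\Omega(n\Dmin)$.

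The missing idea (\cref{lem:eta}) is to put the barrier at distance $\Dmin/2$ instead of $1$. For $Z_t\le\Dmin/2$ one has $(1-e^{-Z_t})R_t\le\max(Z_t,0)\le\Delta_a/2$, so the drift toward arm $1$ is at least $\eta\pi_{t,a}\Delta_a/2$. Change time using the clock $\eta^2\int_0^t(\pi_{u,1}+\pi_{u,a}-(\pi_{u,1}-\pi_{u,a})^2)\d{u}$. Then $U=\theta_1-\theta_a$ satisfies $\d{U}_s\ge\frac{\Delta_a}{2\eta(e^{U_s}+1)}\d{s}+\d{W}_s$ while $U_s\ge-\Dmin/2$. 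Counting down-crossings as in \cref{prop:bm-less-drift} bounds the probability of reaching $-\Dmin/2$ by $(1+\sqrt n/2)\exp(-c\Delta_a\Dmin/\eta)$.

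This is where the square in the hypothesis comes from. The exponent is drift rate times barrier distance, $(\Delta_a/\eta)\cdot\Dmin$, not $\Delta_a^2/\eta^2$ as you wrote. Since $\Dmin/2<1$, this stronger event implies your (b).

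Step 2 needs two smaller repairs.
\begin{enumerate}
\item Run the argument up to the exit time $\tau$ of the good set rather than on the event $E$. Then the stochastic integral has zero mean, and \cref{lem:pi} and $f$ are only evaluated where they are valid.
\item Keep the factor $1-\pi_{t,1}$ in the It\^o correction. The correction is absorbed into half the drift only via $\pi_{t,1}(1-\pi_{t,1})\le\pi_{t,1}R_t/\Dmin$ (from $R_t\ge(1-\pi_{t,1})\Dmin$) together with $\eta\le\Dmin$. The bound $\eta^2\int\pi_{t,1}/(\theta_{t,1}+1+L)^2$ you wrote cannot be compared with the drift when $R_t$ is small.
\end{enumerate}
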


We start with a straightforward lemma governing the dynamics of $\log(\frac{\pi_{t,1}}{\pi_{t,a}})$ that reveals the main challenge when $k > 2$.

\begin{lemma}\label{lem:diff}
Let $Z_{t,a} = \theta_{t,1} - \theta_{t,a}$. Then
\begin{align*}
\d{Z_{t,a}}
&= \eta \left[\pi_{t,a} \Delta_a + (\pi_{t,1} - \pi_{t,a}) R_t\right] \d{t} + \eta \sqrt{\pi_{t,1} + \pi_{t,a} - (\pi_{t,1} - \pi_{t,a})^2} \d{W}_t\,,
\end{align*}
where $(W_t)$ is a standard Brownian motion. 
\end{lemma}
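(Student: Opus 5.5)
The plan is to compute $\d{Z_{t,a}} = \d{\theta_{t,1}} - \d{\theta_{t,a}} = \ip{e_1 - e_a, \d{\theta_t}}$ directly from the SDE for $\theta_t$ (with $\Sigma = \id$). We then split the result into its drift and its martingale part. The drift gives the $\d{t}$ term. The martingale part is a scalar stochastic integral against $B_t$, and by Lévy's characterisation it equals $\eta \sqrt{q_t}\,\d{W}_t$ for some standard Brownian motion $W$, where $q_t$ is its quadratic variation rate.

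\textbf{Drift.} Write $v = (\diag(\pi_t) - \pi_t\pi_t^\top)\mu$, so that $v_b = \pi_{t,b}(\mu_b - \ip{\pi_t,\mu})$. Using $\ip{\pi_t,\mu} = \mu_1 - R_t$, we have $\mu_1 - \ip{\pi_t,\mu} = R_t$ and $\mu_a - \ip{\pi_t,\mu} = R_t - \Delta_a$. Hence
\begin{align*}
v_1 - v_a = \pi_{t,1} R_t - \pi_{t,a}(R_t - \Delta_a) = \pi_{t,a}\Delta_a + (\pi_{t,1} - \pi_{t,a}) R_t \,,
\end{align*}
which, multiplied by $\eta$, is the claimed drift.

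\textbf{Diffusion.} The noise in $\d{\theta_t}$ is $\eta(\id - \pi_t\ones^\top)\diag(\sqrt{\pi_t})\,\d{B}_t$. Pairing it with $u = e_1 - e_a$ gives
\begin{align*}
\eta\, u^\top(\id - \pi_t \ones^\top)\diag(\sqrt{\pi_t})\,\d{B}_t
= \eta\, \bigl(u - (\pi_{t,1} - \pi_{t,a})\ones\bigr)^\top \diag(\sqrt{\pi_t})\,\d{B}_t \,,
\end{align*}
since $u^\top \pi_t = \pi_{t,1} - \pi_{t,a}$. Let $w = \diag(\sqrt{\pi_t})\bigl(u - (\pi_{t,1}-\pi_{t,a})\ones\bigr)$. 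Then
\begin{align*}
\norm{w}_2^2 = \sum_b \pi_{t,b}\bigl(u_b - (\pi_{t,1}-\pi_{t,a})\bigr)^2 \,.
\end{align*}
Expanding with $\sum_b \pi_{t,b} = 1$ and $\sum_b \pi_{t,b} u_b^2 = \pi_{t,1} + \pi_{t,a}$ gives
\begin{align*}
\norm{w}_2^2 = \pi_{t,1} + \pi_{t,a} - 2(\pi_{t,1}-\pi_{t,a})^2 + (\pi_{t,1}-\pi_{t,a})^2 = \pi_{t,1} + \pi_{t,a} - (\pi_{t,1}-\pi_{t,a})^2 \,.
\end{align*}

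\textbf{Brownian motion.} Define $W_t = \int_0^t \ip{w_s/\norm{w_s}, \d{B}_s}$. On the set where $w_s = 0$, replace $w_s/\norm{w_s}$ by an arbitrary fixed unit vector; there the integrand of $Z$ vanishes anyway. This $W$ is a continuous local martingale with quadratic variation $t$, so by Lévy's characterisation it is a standard Brownian motion. The martingale part of $\d{Z_{t,a}}$ is then $\eta\norm{w_t}\,\d{W}_t$.

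The only point needing care is this last step: handling the possible degeneracy $w = 0$ when defining $W$. It is routine, and in fact $\norm{w}^2 > 0$ whenever $\pi_t$ is in the interior of the simplex and $k > 2$. The rest is algebra.
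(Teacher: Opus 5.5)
Your proposal is correct and is essentially the paper's proof: the drift comes from pairing $e_1 - e_a$ with $(\diag(\pi_t) - \pi_t\pi_t^\top)\mu$, and your $\norm{w}_2^2$ is exactly the paper's $\norm{e_1 - e_a}^2_{\diag(\pi_t) - \pi_t\pi_t^\top}$; your L\'evy-characterisation step is left implicit in the paper. A minor point: the degenerate case $w = 0$ never occurs for any $k \geq 2$, not only for $k > 2$, because softmax keeps $\pi_t$ in the interior of the simplex and then $(\pi_{t,1}-\pi_{t,a})^2 < (\pi_{t,1}+\pi_{t,a})^2 \leq \pi_{t,1}+\pi_{t,a}$.
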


The expression in \cref{lem:diff} is revealing. If we are to prove the algorithm is learning, then we must show that $Z_{t,a}$ increases with time for $a \neq 1$.
Naturally this will be easiest to prove if the SDE associated with $Z_{t,a} = \theta_{t,1} - \theta_{t,a}$ has positive drift and well-controlled noise.
Regrettably, however, this is not the case. The drift $\eta \pi_{t,a} \Delta_a + \eta (\pi_{t,1} - \pi_{t,a}) R_t$ can be negative if $\pi_{t,1} < \pi_{t,a}$ 
and the regret is large. If there were no noise, then an elementary analysis of the differential equation would show that the drift is always positive
because at initialisation $\pi_{t,1} \geq \pi_{t,a}$ and the positive drift ensures this stays true throughout the trajectory. 
The following lemma shows that when $\eta$ is suitably small, then $Z_{t,a}$ can be lower-bounded with high probability.

\begin{lemma}\label{lem:eta}
Suppose that $\eta \leq \frac{\Dmin^2}{8\log(2n/\delta)}$, then $\bbP(\inf_{t \leq n} Z_{t,a} \leq -\Dmin/2) \leq \delta$.
\end{lemma}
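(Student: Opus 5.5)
The plan is to control $Z_{t,a}$ with an exponential process $M_t = \exp(-\lambda Z_{t,a})$, stopped at $\tau = \inf\{t : Z_{t,a} \leq -\Dmin/2\}\wedge n$. We choose $\lambda = \Dmin/(2\eta)$, bound $\E[M_\tau]$ using It\^o's rule and \cref{lem:diff}, and then apply Markov's inequality on the event $\{\tau < n \text{ or } Z_{n,a}\le -\Dmin/2\}$. On that event $M_\tau \geq \exp(\lambda\Dmin/2) = \exp(\Dmin^2/(4\eta))$. Since $Z_{0,a} = 0$, we have $M_0 = 1$.

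\textbf{Step 1: the drift is nonnegative before $\tau$.} Consider $t < \tau$, so $Z_{t,a} > -\Dmin/2$. If $Z_{t,a} \geq 0$, then $\pi_{t,1}\ge\pi_{t,a}$, and the drift in \cref{lem:diff} is at least $\eta\pi_{t,a}\Delta_a \geq 0$ because $R_t \geq 0$. If $Z_{t,a}\in(-\Dmin/2,0)$, then $\pi_{t,1} = \pi_{t,a} e^{Z_{t,a}}$, so $\pi_{t,1}-\pi_{t,a} \geq -\pi_{t,a}(1-e^{-\Dmin/2}) \geq -\pi_{t,a}\Dmin/2$. Using $R_t\le 1$ and $\Delta_a\ge\Dmin$, the drift is then at least $\eta\pi_{t,a}\Dmin/2$. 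In this second case $\pi_{t,1}\le\pi_{t,a}$, so the diffusion coefficient squared is at most $\eta^2(\pi_{t,1}+\pi_{t,a}) \leq 2\eta^2\pi_{t,a}$.

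\textbf{Step 2: It\^o's rule.} Since $\d{M_t} = M_t(-\lambda\,\d{Z_{t,a}} + \tfrac{\lambda^2}{2}\d{\langle Z_{\cdot,a}\rangle_t})$, we need the sign of the term $-\lambda\cdot\text{drift} + \tfrac{\lambda^2}{2}\cdot\text{variance}$ in each region.

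In the region $Z_{t,a}\in(-\Dmin/2,0)$, Step 1 bounds this term by
\[
\lambda\eta\pi_{t,a}\left(-\Dmin/2 + \lambda\eta\right) = 0,
\]
by the choice of $\lambda$. So $M$ is a local supermartingale there.

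In the region $Z_{t,a}\geq 0$, the drift contribution is still nonpositive. The variance is at most $\eta^2$, because $\pi_{t,1}+\pi_{t,a}-(\pi_{t,1}-\pi_{t,a})^2\le 1$. Also $M_t\le 1$ in this region. Hence the compensator grows at rate at most $\lambda^2\eta^2/2 = \Dmin^2/8 \leq 1$.

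After localising (the integrand is bounded before $\tau$, since $M_t \leq e^{\lambda\Dmin/2}$ there), optional stopping gives
\[
\E[M_\tau] \leq 1 + \frac{n\Dmin^2}{8} \leq 1+n \leq 2n.
\]

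\textbf{Step 3: conclude.} Markov's inequality gives
\[
\bbP\Bigl(\inf_{t\le n} Z_{t,a}\le -\Dmin/2\Bigr) \leq 2n\exp\bigl(-\Dmin^2/(4\eta)\bigr) \leq 2n\,(2n/\delta)^{-2} \leq \delta,
\]
where we used $\Dmin^2/(4\eta)\geq 2\log(2n/\delta)$.

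\textbf{Main obstacle.} The delicate point is Step 1 in the negative region. There the adverse term $(\pi_{t,1}-\pi_{t,a})R_t$ has to be absorbed by $\pi_{t,a}\Delta_a$, and the variance has to be proportional to $\pi_{t,a}$ with the right constant, so that the exponent $\lambda$ can be as large as $\Dmin/(2\eta)$. This is exactly where $\eta\lesssim\Dmin^2/\log(n/\delta)$ enters. The positive region is harmless, since it only costs a polynomial factor in $n$ that the logarithm in the learning rate absorbs. Some care is also needed to make the optional stopping rigorous, but the stopped integrands are bounded, so the stopped local martingale is a true martingale.
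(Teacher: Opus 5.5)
Your proof is correct, and it takes a different route from the paper.

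\textbf{The paper's route.} The paper time-changes $Z_{t,a}$ by its quadratic variation $s(t)=\eta^2\int_0^t(\pi_{u,1}+\pi_{u,a}-(\pi_{u,1}-\pi_{u,a})^2)\d{u}$. This produces a unit-variance process $U_s$ whose drift, while $U_s\ge-\Dmin/2$, is at least $\Delta_a/(2\eta(e^{U_s}+1))$. It then compares $U$ with the autonomous diffusion of \cref{prop:bm-less-drift}. That diffusion's hitting probability is controlled by two ingredients:
\begin{itemize}
\item a gambler's-ruin bound on $[-\epsilon,1]$;
\item a downcrossing count for $[0,1]$, which is the source of the $1+\sqrt{n}/2$ prefactor.
\end{itemize}

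\textbf{Your route.} You stay in the original clock and use $\exp(-\lambda Z_{t,a})$. You exploit the same structural fact that makes the paper's time-changed drift a function of $U_s$ alone: in the dangerous region $(-\Dmin/2,0)$, both the drift and the variance of $Z_{t,a}$ are proportional to $\pi_{t,a}$. You use it to make $M$ a local supermartingale there with the sharp choice $\lambda=\Dmin/(2\eta)$. You pay for time spent in $Z\ge 0$ through the bounded compensator, which plays the role of the paper's downcrossing factor.

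\textbf{What your route buys.} It is fully self-contained: It\^o's formula, optional stopping for a bounded stopped process, and Markov's inequality suffice. No SDE comparison theorem or downcrossing estimate is needed. It also gives a better exponent, $\Dmin^2/(4\eta)$ versus the paper's $\Dmin^2/(2(e+1)\eta)$, so your argument would go through with the constant $8$ in the learning-rate condition replaced by $4$.

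\textbf{What the paper's route buys.} Modularity and intuition. \cref{prop:bm-less-drift} is a reusable one-dimensional statement, and the time change makes visible why the process has strong positive drift on $[-\Dmin/2,1]$. Both arguments pay only a polynomial factor in $n$ for excursions above zero, and the logarithm in the condition on $\eta$ absorbs it.
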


\begin{proof}
Abbreviate $Z_t = Z_{t,a}$ and let $\tau = \inf\{t : Z_t = -\Dmin/2\}$ and
$s(t) = \eta^2 \int_0^t (\pi_{u,1} + \pi_{u,a} - (\pi_{u,1} - \pi_{u,a})^2) \d{u}$ and $t(\cdot) = s^{-1}(\cdot)$ be its inverse.
Lastly, let $U_{s(t)} = Z_t$. 
By definition, $\inf_{t \leq n} Z_t$ has the same law as $\inf_{s \leq s(n)} U_s$. Moreover, when $U_s \geq -\Dmin/2$, then
\begin{align*}
\d{U_s} 
&= \frac{1}{\eta} \left[\frac{\pi_{t(s),a} \Delta_a + \pi_{t(s),a}(\exp(U_s) - 1) R_{t(s)}}{\pi_{t(s),1} + \pi_{t(s),a} - (\pi_{t(s),1} - \pi_{t(s),a})^2}\right] \d{s} + \d{W}_s \\
&\geq \frac{1}{2\eta} \left[\frac{\pi_{t(s),a} \Delta_a}{\pi_{t(s),1} + \pi_{t(s),a}}\right] \d{s} + \d{W}_s 
= \frac{\Delta_a}{2 \eta (\exp(U_s) + 1)} \d{s} + \d{W}_s \,.
\end{align*}
where the inequality holds because $R_{t(s)} \leq 1$ and $\exp(U_s) - 1 \geq \exp(-\Dmin/2) - 1 \geq -\Dmin/2 \geq -\Delta_a/2$, which means the numerator is positive.
Looking at this relation, when the learning rate is small relative to $\Dmin$, the process has considerable positive drift when $U_s \in [-\Dmin/2,1]$.
Formally, since $s(n) \leq n$ and by a comparison to the SDE in \cref{prop:bm-less-drift},
$\bbP\left(\inf \{U_s : 0 \leq s \leq s(n)\} \leq -\Dmin/2\right) \leq \delta$.
\end{proof}

We are now in a position to prove \cref{thm:upper}.

\begin{proof}[Proof of \cref{thm:upper}]
Let $\delta = 1/n$ and $Z_{t,a} = \theta_{t,1} - \theta_{t,a}$ and define a stopping time $\tau$ by
\begin{align*}
\tau = \inf\left\{t \leq n : \min_{a \in 2,\ldots,k} Z_{t,a} \leq -\Dmin/2 \text{ or } \min_{a \in 1,\ldots,k}\theta_{t,a} \leq -\log(n / \delta)\right\}\,.
\end{align*}
By \cref{lem:bounds,lem:eta} and a union bound, with probability at least $1 - 2k \delta$, $\tau = n$.
For $t \leq \tau$, \cref{lem:conserve} shows that $-\log(n/\delta) \leq \theta_{t,1} = -\sum_{a=2}^k \theta_{t,a} \leq k \log(n/\delta)$.
For $u > -\log(n/\delta)$, let $\psi'(u) = \frac{6k \log(n/\delta)}{u + 1 + \log(n/\delta)}$, which is the function that appeared in \cref{lem:pi}.
Then let
\begin{align*}
\psi(u) = \int_0^u \psi'(v) \d{v} = 6k \log(n/\delta) \log\left(\frac{u + 1 + \log(n/\delta)}{1 + \log(n/\delta)}\right) \,.
\end{align*}
Later we will use the fact that
\begin{align}
\psi(k \log(n/\delta)) \leq 6k \log(n/\delta)  \log(1+k) \,.
\label{eq:psi}
\end{align}
Moreover, for $u > -\log(n/\delta)$,
\begin{align}
\psi''(u) = -\frac{6 k \log(n/\delta)}{(u + 1 + \log(n/\delta))^2} \geq -\psi'(u)\,.
\label{eq:Psi2}
\end{align}
Let $\sigma_t^2 = \eta^2 \pi_{t,1} (1 - \pi_{t,1})$. By definition $\d{\theta_{t,1}} = \eta \pi_{t,1} R_t \d{t} + \sigma_t \d{W}_t$
and hence by It\^o's formula,
\begin{align*}
\d{\psi(\theta_{t,1})} 
&= \psi'(\theta_{t,1}) \d{\theta_{t,1}} + \frac{1}{2} \psi''(\theta_{t,1}) \sigma_t^2 \d{t} \\
&\exl{a}\geq \left[\eta \psi'(\theta_{t,1}) \pi_{t,1} R_t - \frac{\eta^2 \psi'(\theta_{t,1})\pi_{t,1}(1-\pi_{t,1})}{2} \right] \d{t} + \psi'(\theta_{t,1}) \sigma_t \d{W}_t \\
&\exl{b}\geq \left[\eta \psi'(\theta_{t,1}) \pi_{t,1} R_t - \frac{\eta^2 \psi'(\theta_{t,1})\pi_{t,1} R_t}{2 \Dmin} \right] \d{t} + \psi'(\theta_{t,1}) \sigma_t \d{W}_t \\
&\exl{c}\geq \frac{1}{2} \eta \psi'(\theta_{t,1}) \pi_{t,1} R_t \d{t} + \psi'(\theta_{t,1}) \sigma_t \d{W}_t \\ 
&\exl{d}\geq \frac{1}{2} \eta R_t \d{t} + \psi'(\theta_{t,1}) \sigma_t \d{W}_t \,. 
\end{align*}
where \ex{a} follows from \cref{eq:Psi2},
\ex{b} since $R_t \geq (1 - \pi_{t,1}) \Delta_2$,
\ex{c} since $\eta \leq \Dmin$ and \ex{d} follows from \cref{lem:pi}.
Taking expectations and rearranging in combination with \cref{eq:psi} shows that
\begin{align*}
\E[\Reg_\tau] 
&\leq \frac{2}{\eta}\E[\psi(\theta_{\tau,1})] 
\leq \frac{12k \log(n/\delta) \log(1+k)}{\eta} \,.
\end{align*}
Since $\bbP(\tau < n) \leq 2k\delta = 2k/n$, $\E[\Reg_n] \leq 2k + \E[\Reg_\tau]$ and the argument is complete.
\end{proof}

\begin{remark}
The analysis in \cref{lem:eta} can be improved by refining the bound $R_t \leq 1$ to $R_t \leq \Delta_k$. 
With this change one obtains the same bound as \cref{thm:upper} but the condition on $\eta$ can be relaxed to
$\eta = \tilde O(\Dmin^2/\Dmax)$. Consequentially, when all actions have the same suboptimality gap and $\eta = \frac{c\Dmin}{\log(n)}$, then
\begin{align*}
\E[\Reg_n] = O\left(\frac{k \log(k) \log(n)^2}{\Dmin}\right)\,.
\end{align*}
\end{remark}

\section{Lower bound}\label{sec:lower}

\cref{prop:two} shows that when there are only two actions, then setting $\eta$ just slightly smaller than $\Dmin$ leads to near-optimality of policy gradient.
The next construction shows that when there are more than two actions the correct choice of $\eta$ can be as small as $\Dmin^2$ and in general 
no choice leads to near-optimal regret in the sense of being
remotely close to the lower bound by \cite{LR85}.
The fundamental difference is that when $k = 2$, the drift of $\theta_{t,1} - \theta_{t,2}$ is always positive. When there are more arms, this only holds on trajectories
that are well-behaved in the sense that $\theta_{t,1} - \theta_{t,a}$ never gets too negative for $a \neq 1$.

\paragraph{Lower bound construction}
The poor behaviour of policy gradient is most dramatic when $k > 2$ and $\mu$ is such that $\Delta = (0, \Dmin, 1,\ldots,1)$ with $\Dmin$ close to zero.
The following arguments can be made to work with $\Sigma = \id$ but are simplified considerably by instead letting $\Sigma = \diag(e_1 + e_2)$ so that the only noise arises from the first two actions.
What happens in this scenario is that for a long time actions $1$ and $2$ are nearly statistically indistinguishable.
Since actions $a > 2$ are very suboptimal, $\theta_{t,a}$ for $a > 2$ decreases rapidly and $\theta_{t,1}$ and $\theta_{t,2}$ increase.
But they do not increase together at the same rate. Rather, unless the learning rate is very small, the dynamics of policy gradient effectively `picks a winner' in $\{1,2\}$ at random
and by the time $\pi_{t,a}$ is negligible for $a > 2$, it turns out that either $\pi_{t,1}$ or $\pi_{t,2}$ is nearly $1$.
From then on policy gradient behaves as if the bandit has two actions, but with a potentially very disadvantageous initialisation.

\begin{theorem}\label{thm:lower}
When $\Delta = (0, \Dmin, 1,\ldots,1)$ and $\Sigma = \diag(1, 1, 0, \ldots, 0)$ and $k \geq C \log(n / \Dmin)$ and 
learning rate bounded in $C \Dmin^2 \leq \eta \leq c / k$,
the regret of continuous policy gradient (\cref{alg:cpg}) satisfies $\E[\Reg_n] = \Omega(n \Dmin)$.
\end{theorem}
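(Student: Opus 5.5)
Throughout, let $D_t = \theta_{t,1} - \theta_{t,2}$ and $S_t = \pi_{t,1} + \pi_{t,2}$. With $\Sigma = \diag(1,1,0,\ldots,0)$, \cref{lem:diff} specialised to $a=2$ gives
\begin{align*}
\d{D_t} = \eta\left[\pi_{t,2}\Dmin + (\pi_{t,1}-\pi_{t,2})R_t\right]\d{t} + \eta\sqrt{S_t - (\pi_{t,1}-\pi_{t,2})^2}\,\d{W}_t\,.
\end{align*}
The plan has three phases.
\begin{enumerate}
\item An elimination phase, in which the arms $a>2$ are driven to negligible probability while $D_t$ diffuses.
\item A locking phase, in which, with constant probability, $D_t$ ends this period below some level $-L$ with $L$ large.
\item A two-armed phase, in which the process behaves like \cref{prop:two} started from a bad initialisation and cannot recover within horizon $n$.
\end{enumerate}
Since the regret is at least $\Dmin\int_0^n \pi_{t,2}\,\d{t}$, it suffices to show that with constant probability $\pi_{t,2} \geq 1/2$ for a constant fraction of $[0,n]$.

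\textbf{Elimination phase.} The arms $a>2$ are noiseless and all share the same $\theta_{t,a}$ by symmetry, with drift $\eta\pi_{t,a}(\mu_a - \ip{\pi_t,\mu}) \approx -\eta\pi_{t,a}(1-\pi_{t,3:k})$. This is essentially deterministic. While the arms $a>2$ hold most of the mass, the drift of $\theta_{t,1}$ and $\theta_{t,2}$ is roughly $\eta\pi_{t,i}$. Hence $S_t$ grows from $2/k$ to $1/2$ over a time of order $k/\eta$, and afterwards $1-S_t$ decays like $1/(\eta t)$. I would first establish these facts as deterministic ODE bounds.

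\textbf{Locking phase.} During elimination, the drift of $D_t$ is dominated by the term $(\pi_{t,1}-\pi_{t,2})R_t$ with $R_t \approx 1-S_t$. This is a destabilising self-reinforcement: whichever of arms $1,2$ leads gets pushed further ahead. The $\Dmin$ term contributes only $\eta\Dmin\int\pi_{t,2}\,\d{t}$.

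Time-change $D$ by its quadratic variation $\eta^2\int S_t\,\d{t}$, as in \cref{lem:eta}. In the new clock, the stabilising drift per unit time is of order $\Dmin/\eta \leq \Dmin^{-1}/C$. The total quadratic variation accumulated while $S_t$ is between $2/k$ and $1/2$ is of order $\eta^2 \cdot k/\eta \cdot S = \eta$ in the early stage. However, the destabilising term acts on the relative quantity $\pi_{t,1}/\pi_{t,2} = e^{D_t}$.

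A cleaner approach is to study $D_t$ through a sinh-type Lyapunov function $\phi(D)$. I would show that $\phi(D_t)$ is a submartingale up to the small $\Dmin$ correction, and that it accumulates variance of order $\eta\log(\cdot)$ over the elimination period. The elimination period lasts a logarithmic number of "$e$-foldings" of $S_t$, and $k \geq C\log(n/\Dmin)$ guarantees there are enough of them. The goal is to show that at the end of the period $|D_t| \geq L := C'\log(n/\Dmin)$ with constant probability. By the near-symmetry that holds when $\eta \gg \Dmin^2$, $D_t \leq -L$ then also holds with probability close to $1/2$. The condition $\eta \leq c/k$ is used to keep the elimination phase slow enough for amplification. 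The condition $\eta \geq C\Dmin^2$ ensures that the noise amplification, of order $\sqrt{\eta}$ per $e$-folding, dominates the bias $\Dmin$ per $e$-folding.

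\textbf{Two-armed phase.} Once $D_\tau \leq -L$ and $1-S_\tau$ is small, I would use the exponential supermartingale $M_t = \exp(bD_t)$ from \cref{prop:two}, started at $\tau$. Its growth rate is proportional to $\pi_{t,1}\pi_{t,2} \approx e^{D_t}$. A Doob maximal argument for $e^{D_t}$ then shows that $D_t$ stays below $-L/2$ until time $n$ with constant probability, because reaching $0$ requires accumulated drift $\eta\Dmin\int\pi_{t,1}\,\d{t}$ of order $L$ while $\pi_{t,1} \leq e^{-L/2} \leq \Dmin/n$. On this event $\pi_{t,2} \geq 1/2$ for $t \in [\tau,n]$, and since $\tau \leq n/2$ the regret is $\Omega(n\Dmin)$.

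\textbf{Main obstacle.} The hard step is the locking phase. One must quantify the instability of $D_t$ while the arms $a>2$ are being removed and show it produces $|D|$ of order $\log(n/\Dmin)$, not merely of constant order. This is where $k \geq C\log(n/\Dmin)$ must enter. I would first try to make the calculation exact by conditioning on the deterministic-looking path of $S_t$, and then treat $D_t$ as a one-dimensional SDE with time-dependent coefficients.
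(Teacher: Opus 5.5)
You have the right mechanism: the term $(\pi_{t,1}-\pi_{t,2})R_t$ is a self-reinforcing instability, a $\sinh$ transform linearises it, and $\sqrt{\eta}\gg\Dmin$ lets amplified noise beat the bias. But the locking step, which you flag as the crux, is missing its key idea. The paper does not time-change by the quadratic variation of $D$. It uses the elimination clock $s(t)=\int_0^t\eta\pi_{u,3+}(1-\pi_{u,3+})\,\mathrm{d}u$. In that clock $S_t=\theta_{t,1}+\theta_{t,2}$ tracks $s(t)$ with high probability, which matters because in real time the elimination is not deterministic: $\pi_{t,3+}$ depends on $D_t$ through $\cosh(D_t/2)$. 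Also, $D$ nearly decouples: $\mathrm{d}D\le[\Dmin(1+2G_t)+\tanh(D/2)]\,\mathrm{d}s+\sqrt{\eta}\,\sigma_t\,\mathrm{d}W_s$ with $\sigma_t^2\le 1+4G_t$ and $G_t=\pi_{t,1}/\pi_{t,3+}$. So $\sinh(D/2)$ can be dominated by an explicitly solvable unstable linear SDE (\cref{rem:factors}).

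This also shows you misplace the role of $k\ge C\log(n/\Dmin)$. It is not about the roughly $\log(k/2)$ $e$-foldings of $\pi_{t,1}+\pi_{t,2}$, which cannot by themselves produce $|D|$ of order $\log(n/\Dmin)$. The $k-2$ bad arms share $\theta_{t,a}=-S_t/(k-2)$, so their total unnormalised weight stays of order $k$ while $S_t$ is at most of order $k$. This keeps $G_t$ under control (\cref{eq:G1,eq:G2}), and hence keeps the stabilising drift $\Dmin\alpha_t\le\Dmin(1+2G_t)$ small compared with the unit-size destabilising drift $\tanh(D/2)$ throughout the clock.

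Second, your quantitative plan does not close. Your bound $e^{-L/2}\le\Dmin/n$ forces $L\ge 2\log(n/\Dmin)$, but $-D_t$ never gets that large before time $n$. The drift of $\theta_{t,2}$ is at most $\eta\pi_{t,2}\pi_{t,3+}$ and $\pi_{t,3+}$ is at most about $k e^{-\theta_{t,2}}$, so $e^{\theta_{t,2}}$ grows at most like $1+O(\eta k t)$. Meanwhile $\theta_{t,1}$ has nonnegative drift. Hence $-D_t$ is at most about $\log(1+\eta k n)<\log n$ on $[0,n]$. Nor is there in general a $\tau\le n/2$ at which locking has occurred. If $n$ is below the lock-in time (when $\sqrt{\eta}\,e^{s/2}$ reaches order one), $D$ is still $o(1)$ at time $n$. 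Then your sufficient condition $\pi_{t,2}\ge 1/2$ fails, since $\pi_{t,1}\approx\pi_{t,2}$.

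The paper avoids both issues with a barrier argument instead of a phase decomposition. On an event of constant probability, $Z_t=D_t$ stays below the $\arcsinh$ barrier of \cref{def:tau}, which is at most $1$, for the whole horizon unless the clock first reaches $s_{\max}$. Staying below $1$ already gives $\pi_{t,1}\le e/(1+e)$ and $R_t\ge\Dmin(1-\pi_{t,1})$. A non-recovery argument is needed only in the $s_{\max}$ case, where $\pi_{\tau,1}$ and $\pi_{\tau,3+}$ are both at most $1/n$. Your two-armed phase could plausibly be repaired with a smaller $L$ and a Riccati-type bound on $e^{D_t}$. Without the elimination-clock comparison, though, the locking phase remains unproved.
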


\begin{remark}
The assumption that $\eta \leq c/k$ is for simplicity in the analysis and can be removed at the price of a slightly more complicated proof.
\end{remark}

In full gory detail the proof is embarrassingly intricate. 
We try to present the high-level argument in the main body and defer the fiddly computations to the appendix.
Alternatively, you can skip all the analysis and look at \cref{fig:lower} in \cref{app:exp}, which illustrates the striking behaviour of the discrete time policy gradient on
the instance in \cref{thm:lower}. You may also wonder if the lower bound might hold with $k = 3$. We believe the answer is no, which is partially supported by
the results in \cref{fig:lower3}.

\newcommand{\pibad}[1]{\pi_{#1,3+}}
\begin{proof}[Proof of \cref{thm:lower}]
Abbreviate $m = k-2$ and $\pibad{t} = \sum_{a=3}^k \pi_{t,a}$ to be the probability assigned to all but the first two actions.
Our argument revolves around showing that with constant probability $\theta_{t,2}$ grows much faster than $\theta_{t,1}$ for the initial period when 
$\pibad{t}$ is decreasing rapidly.
Let $S_t = \theta_{t,1} + \theta_{t,2}$ and $Z_t = \theta_{t,1} - \theta_{t,2}$
and $C_t = \eta \pibad{t}(1 - \pibad{t})$ and $s(t) = \int_0^t C_u \d{u}$ and $s \mapsto t(s)$ be its inverse.

\begin{definition}\label{def:tau}
Let $\epsilon = 2/m$ and
define a stopping time $\tau$ as the first time that any of the following holds:
\begin{enumerate}
\item $\tau = n$; or
\item $S_\tau \notin ((1 - \Dmin) s(\tau) - 1, s(\tau) + 1)$; or
\item $Z_\tau \geq 2 \arcsinh\left(\sqrt{\eta} (\exps^{-s(\tau)}/4 - 1/16) \exps^{(1-\epsilon)s(\tau)/2} \right)$; or
\item $s(\tau) = s_{\max} \triangleq \frac{1}{1-\epsilon}[\log(400/\eta) + \log(n)]$.
\end{enumerate}
\end{definition}

By our assumption that the variance only arises from the first two actions it follows that $\theta_{t,a} = \theta_{t,b}$ for all $t$ and $a, b \geq 3$,
which by \cref{lem:conserve} and the definition $S_t = \theta_{t,1} + \theta_{t,2}$ means that $\theta_{t,a} =-S_t / m$ and
\begin{align*}
\frac{\pi_{t,1}}{\pibad{t}}
&= \frac{1}{m} \exp\left(\frac{S_t + Z_t}{2} + \frac{S_t}{m}\right) \triangleq G_t \,. 
\end{align*}
Note that if $t \leq \tau$, then by the inequality $\arcsinh(x) \leq -\log(-2x)$, 
\begin{align}
Z_t \leq \min\left(1, \log\left(\frac{400}{\eta}\right) - (1 - \epsilon)s(t) \right)\,.
\label{eq:Z}
\end{align}

\paragraph{Step 1: Dynamics and intuition}
By substituting the definitions and crude bounding we obtain the following:
\begin{align}
\d{S_t} 
&=  \big[1 - \frac{\pi_{t,2}}{  1-\pibad{t}} \Delta_2\big] C_t \d{t} + \sqrt{\eta \pibad{t} C_t} \d{W}_t \quad \text{and} \label{eq:St}\\
\d{Z_t}
&\leq \big[\Dmin(1 + 2 G_t) + \tanh(Z_t/2) \big] C_t \d{t} + \sqrt{\eta \sigma_t^2 C_t} \d{W}_t \label{eq:Zt} \,,
\end{align}
where
\begin{align}
1 - \tanh(Z_t/2)^2 \leq \sigma_t^2 \leq 1 + 4 G_t \,.
\label{eq:sigma}
\end{align}
The tedious calculation establishing the above is deferred to the last step.
You will surely notice that these SDEs have been massaged so that the drift terms depend linearly on $C_t$ and the diffusion terms depend linearly on $\sqrt{C_t}$,
which suggests we might make progress by changing time using the clock $s(t) = \int_0^t C_u \d{u}$.
Rather than moving immediately to the formalities, let us spend a moment to argue informally.
The SDE in \cref{eq:St} is very well-behaved after a time change.
Since $\Dmin$ is small and $\pi_{t,2} \leq \pi_{t,1} + \pi_{t,2} = 1 - \pibad{t}$, we have
\begin{align*}
\d{S_{t(s)}} \approx \d{s} + \sqrt{\eta \pibad{t(s)}} \d{W_s} \,,
\end{align*}
which implies that with high probability $S_{t(s)} \approx s$ for all $s \leq s_{\max}$.
The SDE in \cref{eq:Zt} is much more complicated. At initialisation $G_t = 1/m$ is relatively small, so that
\begin{align*}
\d{Z}_{t(s)} \approx [\Dmin + \tanh(Z_{t(s)}/2)] \d{s} + \sqrt{\eta} \d{W_s} \,.
\end{align*}
When $|z|$ is small, then $\tanh(z/2) \approx z/2$. Hence, when $|Z_{t(s)}|$ is small, a reasonable approximation for a solution (see \cref{rem:factors}) is
\begin{align*}
Z_{t(s)} \approx 2(\exp(s/2) - 1) \Dmin + \sqrt{\eta} \exp(s/2) \int_0^s \exp(-u/2) \d{W}_u \,,
\end{align*}
which means that $Z_{t(s)}$ has approximate law $\cN(2 (\exp(s/2) - 1) \Dmin, \eta (\exp(s) - 1))$.
In particular, provided that $\sqrt{\eta}$ is suitably larger than $\Dmin$, then the noise dominates and with roughly constant probability $|Z_{t(s)}| \approx \exp(s/2) \sqrt{\eta}$
and $\sign(Z_{t(s)})$ is close to uniform on $\{-1,1\}$.
In particular, if $s \approx \log(1/\eta)$, then $|Z_{t(s)}| \approx 1$. At this point $\tanh(Z_{t(s)}/2) \approx \sign(Z_{t(s)})$ 
and the drift term is roughly constant and depending on the sign the process drifts up or down at a roughly linear rate.
Hence a reasonable ansatz is that $|Z_{t(s)}| \approx s$ for large $s$ with the sign roughly uniform when $\sqrt{\eta}$ is sufficiently larger than $\Dmin$.
This turns out to be the correct rate. 
As a consequence, we can expect that $S_t \approx s(t)$ and $Z_t \approx -s(t)$ with roughly constant probability. 
Hence $\theta_{t,1} = (S_t + Z_t)/2 \approx 0$ while $\theta_{t,2} = (S_t - Z_t)/2 \approx s(t)$, which at minimum implies that $\theta_{t,1}$ does not grow much larger than $\theta_{t,2}$
and implies linear regret.
The challenge is we need a sample path result and the additional terms in \cref{eq:Zt} need to be handled carefully.

\begin{remark}
Handling the additional terms is unsurprisingly non-trivial. Indeed, they are what causes the argument above to fail asymptotically, which it must because for any learning rate
policy gradient does learn asymptotically.
\end{remark}

\paragraph{Step 2: Formal details}

The following proposition formalises the ansatz in the previous step.
The proof is based on a technical comparison to the standard linear SDE and is deferred to \cref{app:sde}.

\begin{proposition}\label{prop:sde}
With probability at least $c$ either $\tau = n$ or $s(\tau) = s_{\max}$.
\end{proposition}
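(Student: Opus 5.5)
The plan is to show that, with probability at least $c$, neither clause (b) nor clause (c) of \cref{def:tau} fires before clause (a) or (d). I treat the two clauses separately and then take an intersection. Everything is done in the clock $s = s(t)$, where by \cref{eq:St,eq:Zt} the drifts are $O(1)\d{s}$ and the diffusions are $O(\sqrt{\eta})\d{W_s}$. Up to $\tau$, the bound \cref{eq:Z} and $S_t \le s(t)+1$ give the crude estimate
\[
G_t \le \frac{e}{m}\sqrt{\frac{400}{\eta}}\,\exps^{\epsilon s/2 + s/m}.
\]
For $s \le s_{\max}$ we have $\epsilon s_{\max} = O(\log(n/\eta)/m)$, which is $O(1)$ by the assumption $k \ge C\log(n/\Dmin)$ together with $\eta \ge C\Dmin^2$. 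I will use this estimate repeatedly to show the $G_t$-terms are perturbative.

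\emph{Clause (b).} By \cref{eq:St}, the time-changed process $S_{t(s)} - s$ has drift in $[-\Dmin, 0]\d{s}$, since $\pi_{t,2} \le 1-\pibad{t}$. Its martingale part has quadratic variation at most $\eta s$. So leaving the band $((1-\Dmin)s - 1,\, s+1)$ requires the martingale to exceed $1$ in absolute value before $s_{\max}$. By Bernstein's inequality for continuous martingales this has probability at most $2\exp(-1/(2\eta s_{\max}))$. Since $\eta \le c/k$ and $s_{\max} = O(\log(n/\eta)) = O(k)$ (using $k \ge C\log(n/\Dmin)$ and $\eta \ge C\Dmin^2$), this probability is at most a small constant.

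\emph{Clause (c).} This is the heart of the argument. The deterministic flow $\d{z} = \tanh(z/2)\d{s}$ conserves $\sinh(z/2)\exps^{-s/2}$, so I linearise with
\[
Y_s = \sinh(Z_{t(s)}/2)\,\exps^{-(1-\epsilon)s/2}.
\]
Clause (c) is then exactly the event that $Y_s$ reaches the barrier $b(s) = \sqrt{\eta}\,(\exps^{-s}/4 - 1/16)$. Applying It\^o to $Y$ using \cref{eq:Zt}:
\begin{itemize}
\item[(i)] The $\tanh$ drift combined with the factor $\exps^{-(1-\epsilon)s/2}$ leaves a drift $\tfrac{\epsilon}{2}\sinh(Z/2)\exps^{-(1-\epsilon)s/2}$, which is non-positive when $Z \le 0$. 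This is the reason for including $\epsilon$.
\item[(ii)] The terms $\Dmin(1+2G_t)\cosh(Z/2)$ and the It\^o correction $\tfrac{\eta}{8}\sigma_t^2\sinh(Z/2)$ form a remaining drift that must be shown to be small.
\item[(iii)] The martingale part has integrand $\tfrac{\sqrt{\eta}}{2}\sigma_t\cosh(Z/2)\exps^{-(1-\epsilon)s/2}$. By \cref{eq:sigma}, $\sigma_t\cosh(Z/2) \in [1, \cosh(Z/2)\sqrt{1+4G_t}]$, so this integrand is essentially $\tfrac{\sqrt{\eta}}{2}\exps^{-(1-\epsilon)s/2}$.
\end{itemize}
Hence $Y_s \approx M_s$, a Gaussian martingale with variance about $\tfrac{\eta}{4}(1-\exps^{-s})$. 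Changing to $u = 1-\exps^{-s}$, I need a Brownian motion $\tfrac{\sqrt{\eta}}{2}B_u$ on $u \in [0,1)$ to stay below $\sqrt{\eta}\,((1-u)/4 - 1/16)$. This boundary starts at $3\sqrt{\eta}/16 > 0$ and ends at $-\sqrt{\eta}/16$, and everything scales with $\sqrt{\eta}$. So the event has an absolute positive probability $2c$, by the explicit linear-boundary formula or by requiring $B$ to dip to $-1$ early and stay below $-1/2$.

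I then choose a margin, say $\sqrt{\eta}/100$, and show that on this event the drift perturbations in (ii) and the deviation of the quadratic variation in (iii) move $Y$ by less than the margin, except on an event of probability at most $c$.

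\emph{Main obstacle.} The hardest step is bounding $\int \Dmin(1+2G)\cosh(Z/2)\exps^{-(1-\epsilon)s/2}\d{s}$. When $Z \approx -(1-\epsilon)s$, $\cosh(Z/2)$ cancels the exponential factor, and the integrand is of order $\Dmin(1+G)\exps^{\epsilon s/2}$ rather than decaying. My first attempt would run a bootstrap inside the favourable event: there $Y_s \le -\sqrt{\eta}/32$ for $s \ge s_0 = O(1)$, so $\cosh(Z/2) \approx |\sinh(Z/2)|$ and the drift is comparable to $\Dmin|Y|/\sqrt{\eta}\cdot\sqrt{\eta}$. Gr\"onwall on $|Y|$ with rate $\Dmin + \Dmin G + \eta$ versus the helpful rate $\epsilon/2$ from (i) then keeps $Y$ negative, since $\Dmin \le \sqrt{\eta/C}$ and $G = O(\exps^{O(1)}/(m\sqrt{\eta}))$. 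If the bound on $G$ is too lossy, I would instead bound $G$ via $\pi_{t,1}/\pibad{t}$ directly on the event, where $Z$ is very negative and $\theta_{t,1}$ stays near $0$. Finally, intersecting with the event of clause (b) gives total failure probability at most $1-2c+c$, which proves \cref{prop:sde}.
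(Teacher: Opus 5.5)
\textbf{Overall.} Your architecture is the paper's. It has three parts: concentration of $S_{t(s)}$ in the $s$-clock; the transform $\sinh(Z_t/2)$ discounted by $\exps^{-(1-\epsilon)s/2}$, so that the extra rate $\epsilon/2$ absorbs the $\Dmin\cosh(Z_t/2)$ drift; and a constant-probability Brownian event below a barrier that is linear in $1-\exps^{-s}$. The packaging differs. The paper dominates $U_{t(s)}$ by an explicit linear process solved with integrating factors, and uses the event $W_u+u\in[-1/8,1/8]$, so it only needs $(1-\exps^{-s})/4\le[M]_s\le 3/4$. Your margin-plus-perturbation variant is a reasonable alternative. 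However, two steps do not go through as written.

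\textbf{The control of $G_t$.} The single bound $G_t=O(1/(m\sqrt{\eta}))$ does not make the $G$-terms perturbative.
\begin{itemize}
\item The additive drift contributes $\int_0^{s}\Dmin G_{t(u)}\exps^{-(1-\epsilon)u/2}\d{u}$, which that bound only controls by $O(\Dmin/(m\sqrt{\eta}))$. This is below your margin $\sqrt{\eta}/100$ only if $\Dmin\ll m\eta$. That fails, e.g.\ for $\eta=C\Dmin^2$ and $\Dmin\le 1/(Cm)$, which the hypotheses allow.
\item The excess quadratic variation is at least $\frac{\eta}{4}\int_0^{s}\exps^{-(1-\epsilon)u}G_{t(u)}\d{u}$. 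Your bound only gives $O(\sqrt{\eta}/m)$ for it, against a main term of order $\eta$. So the claim that the martingale part is essentially Gaussian is unsupported.
\end{itemize}
The missing ingredient is the second bound $G_t\le\frac{8}{m}\exps^{s(t)/2}$, which follows from $Z_t\le 1$ (clause (c)) and $S_t\le s(t)+1$. Using $\min(\frac{8}{m}\exps^{s/2},\frac{200}{m\sqrt{\eta}})$, as the paper does, makes the two integrals $O(s_{\max}/m)$ and $O(1/m)$. Your fallback looks in the wrong place. The slack in the crude bound is at early times, where $Z_t\le 1$ forces $G_t\ll 1/(m\sqrt{\eta})$. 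At late times on the favourable event, $\theta_{t,1}$ is typically about $\frac12\log(1/\eta)$ rather than near $0$, so $G_t$ really is of order $1/(m\sqrt{\eta})$ there.

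\textbf{The rate comparison.} Your Gr\"onwall step needs $\Dmin(1+2G_t)\le\epsilon=2/m$, up to constants. The bound $\Dmin\le\sqrt{\eta/C}$ handles $\Dmin G_t$ but not the bare $\Dmin$: combined with $\eta\le c/k$ it only gives $\Dmin=O(k^{-1/2})$, not $O(1/m)$. The paper simply uses $\Dmin\le 1/m$ at exactly this point (step (d) of its drift bound). Your argument needs the same smallness of $\Dmin$, and it does not follow from $\eta\ge C\Dmin^2$.

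\textbf{A smaller slip.} $Y_s\le-\sqrt{\eta}/32$ gives $\cosh(Z/2)\approx|\sinh(Z/2)|$ only once $s$ is of order $\log(1/\eta)$, not for all $s\ge s_0=O(1)$. Instead, use $\cosh(Z/2)\le 1+|\sinh(Z/2)|$ (the paper's $\sqrt{1+u^2}\le 1+\sqrt{2}-u\sind_{u<0}$) and integrate the constant part against $\exps^{-(1-\epsilon)s/2}$. That step again requires the early-time bound on $G_t$.
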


Hence it suffices to show that on this event $\int_0^n R_t \d{t} \geq c n \Dmin$.
There are two cases. When $\tau = n$, then $Z_t < 1$ for all $t \leq n$ and $\pi_{t,1} \leq e \pi_{t,2} \leq e (1 - \pi_{t,1})$ and hence $\pi_{t,1} \leq e/(1+e)$, from which it follows
that $\Reg_n = \Omega(n \Dmin)$.
Suppose instead that $\tau < n$, which means that $s(\tau) = s_{\max}$.
Therefore, by \cref{eq:Z},
\begin{align*}
Z_\tau 
\leq \log\left(\frac{400}{\eta}\right) - (1 - \epsilon) s_{\max} 
= -\log(n) \,.
\end{align*}
Similarly,
\begin{align*}
S_\tau \geq (1 - \Dmin) s_{\max} - 1 \geq (1 - \epsilon) s_{\max} - 1 = \log\left(\frac{400n}{\eta}\right) - 1 \geq \log(n/\eta) \,.
\end{align*}
Therefore $\pi_{\tau,1} \leq \exp(Z_\tau) \leq \frac{1}{n}$ and since $\eta \leq c/k \leq c/m$,
\begin{align*}
\pibad{\tau}
\leq m\exp\left(-\frac{S_\tau}{m} - \frac{S_\tau - Z_\tau}{2}\right)
\leq \frac{m \eta}{n} \leq 1/n \,.
\end{align*}
Hence $\pi_{t,2} \geq 1 - 2/n$ and from this a straightforward argument shows that with high probability the algorithm never recovers:
\begin{align*}
\bbP\left(\sup_{t \geq \tau} \pi_{t,1} \geq 1/2 \,\bigg|\, \sF_\tau\right) \leq 1/2 \,.
\end{align*}
Since $Z_t < 1$ for $t \leq \tau$, it follows that with constant probability $\pi_{t,1} \leq 1/2$ for all $t \leq n$ and therefore
$\E[\Reg_n] = \Omega(n \Dmin)$ as required.

\paragraph{Step 4: The calculations}
We promised the concrete calculations. By definition
\begin{align*}
\d{\theta_{t,1}}
&= \eta \pi_{t,1} R_t \d{t} + \eta\left[\sqrt{\pi_{t,1}} \d{B}_{t,1} - \pi_{t,1}^{3/2} \d{B}_{t,1} - \pi_{t,1} \sqrt{\pi_{t,2}} \d{B}_{t,2}\right] \text{ and } \\
\d{\theta_{t,2}}
&= \eta \pi_{t,2} \left[R_t - \Delta_2\right] \d{t} + \eta \left[\sqrt{\pi_{t,2}} \d{B}_{t,2} - \pi_{t,2}^{3/2} \d{B}_{t,2} - \pi_{t,2} \sqrt{\pi_{t,1}} \d{B}_{t,1}\right] \,.
\end{align*}
The calculation for $\d{S_t}$ follows by substituting the definitions.
For $\d{Z_t}$, we have
\begin{align*}
\d{Z_t}
&= \eta\left[\pi_{t,2} \Delta_2 + (\pi_{t,1} - \pi_{t,2}) R_t\right] \d{t} + \eta \sqrt{\pi_{t,1} + \pi_{t,2} - (\pi_{t,1} - \pi_{t,2})^2} \d{W}_t \\
&= \left[\alpha_t \Dmin + \tanh(Z_t/2)\right] C_t \d{t} + \sqrt{\eta \sigma_t^2 C_t} \d{W}_t\,,
\end{align*}
with $(\alpha_t)$ and $(\sigma_t)$ defined by
\begin{align*}
\alpha_t &= \frac{\pi_{t,2}(1 + \pi_{t,1} - \pi_{t,2})}{\pibad{t}(1 - \pibad{t})} \\
\sigma_t^2 &= 1 + \frac{4\pi_{t,1} \pi_{t,2}}{\pibad{t}(1 - \pibad{t})} - (1 - \pibad{t}) \left(\frac{\pi_{t,1} - \pi_{t,2}}{\pi_{t,1} + \pi_{t,2}}\right)^2  \,.
\end{align*}
Since $\pi_{t,2} \leq \pi_{t,1} + \pi_{t,2} = 1 - \pibad{t}$,
\begin{align*}
\alpha_t = \frac{\pi_{t,2}(2\pi_{t,1} + \pibad{t})}{\pibad{t}(1 - \pibad{t})} \leq 1 + 2 G_t \,. 
\end{align*}
Moreover, $\frac{\pi_{t,1} - \pi_{t,2}}{\pi_{t,1} + \pi_{t,2}} = \tanh(Z_t/2)$ so that
$1 - \tanh(Z_t/2)^2 \leq \sigma_t^2 \leq 1 + 4 G_t$.
\end{proof}

\section{Discussion}\label{sec:disc}

\paragraph{Continuous time vs discrete time}
When the learning rate is small, the continuous time and discrete time processes are extremely similar.
Even for larger learning rates, the noise terms contract asymptotically in both cases, suggesting that the diffusion approximation may be good
for any learning rate once the algorithm reaches the asymptotic regime.
We believe our proof technique for the upper bound is likely to translate to the discrete time setting, probably with only a little tedium.
Proving the lower bound in discrete time is probably more challenging and arguably less worthwhile.

\paragraph{Refining the lower bound}
Our upper bound is $\tilde O(k / \Dmin^2)$. 
The lower bound shows that this is not in general improvable when $k$ is logarithmic.
An interesting direction is to extend the lower bound construction to introduce a linear dependence on $k$.

\paragraph{$\bm{k}$-dependence}
\cite{baudry2025does} constructed a lower bound showing that the learning rate needs to be $O(1/k)$ and speculated that the correct learning rate might be around $\Dmin / k$.
Our upper bound holds without assuming that $\eta$ is less than $O(1/k)$, suggesting that this is a transient requirement where the continuous time and discrete time setups are
not comparable. Moreover, our lower bound shows that in certain cases the learning rate needs to be $O(\Dmin^2)$ in order to achieve sublinear regret.

\paragraph{Logarithmic factors}
The upper bound in \cref{thm:upper} relies on the assumption that $\eta \leq c \Dmin^2 / \log(n)$. One may wonder if this logarithmic term in the denominator can be dropped.
The argument in \cref{lem:eta} is quite crude, so there may be room for improvement there.

\bibliographystyle{abbrvnat}
\bibliography{all}

\appendix

\crefalias{section}{appendix}

\section{Technical inequalities}\label{app:tech}

\begin{proposition}\label{prop:bm-drift}
Suppose that $X_0 = 0$ and $\d{X}_t = a \d{t} + \d{B_t}$ with $a > 0$. Then
\begin{align*}
\bbP\left(\inf_{t \geq 0} X_t \leq -\epsilon\right) \leq \exp(-2a\epsilon) \,.
\end{align*}
\end{proposition}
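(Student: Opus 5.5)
The approach is the standard exponential martingale argument. Put $\lambda = 2a$ and define $M_t = \exp(-\lambda X_t)$. By It\^o's formula,
\begin{align*}
\d{M_t} = M_t\left(-\lambda a + \tfrac{1}{2}\lambda^2\right)\d{t} - \lambda M_t \d{B_t} = -2a M_t \d{B_t}\,.
\end{align*}
The drift vanishes because $-\lambda a + \lambda^2/2 = -2a^2 + 2a^2 = 0$. So $(M_t)$ is a non-negative continuous local martingale; in fact it is the geometric Brownian motion $\exp(-2aB_t - 2a^2 t)$, hence a true martingale, and $M_0 = 1$.

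Next, let $\tau = \inf\{t \geq 0 : X_t \leq -\epsilon\}$. On the event $\{\tau < \infty\}$, continuity of paths gives $X_\tau = -\epsilon$, and therefore $M_\tau = \exp(2a\epsilon)$. Fix a finite horizon $T$ and apply optional stopping to the bounded stopping time $\tau \wedge T$. This gives $1 = \E[M_{\tau\wedge T}] \geq \exp(2a\epsilon)\,\bbP(\tau \leq T)$. Equivalently, one can use Ville's maximal inequality for non-negative supermartingales,
\begin{align*}
\bbP\left(\sup_t M_t \geq e^{2a\epsilon}\right) \leq e^{-2a\epsilon}\,.
\end{align*}
Letting $T \to \infty$ and using monotone convergence yields $\bbP(\inf_{t\geq 0} X_t \leq -\epsilon) = \bbP(\tau < \infty) \leq \exp(-2a\epsilon)$.

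The argument has no real obstacle. The only points needing care are two. First, the stopped process must be handled on a finite horizon, or Ville's inequality invoked, so that infinite time causes no problems. Second, one needs that hitting $-\epsilon$ occurs exactly at the level, which is where path continuity enters. The bound is in fact an equality, since $X_t \to \infty$ almost surely when $a > 0$, but only the inequality is needed.
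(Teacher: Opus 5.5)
Your proof is correct and is essentially the paper's argument: the paper also uses that $M_t = \exp(-2aX_t)$ is a martingale and applies Doob's maximal inequality, which is your Ville's-inequality route; your optional-stopping variant is an equivalent alternative. The one small point is identifying $\{\inf_{t\ge 0} X_t \le -\epsilon\}$ with $\{\tau<\infty\}$, which holds only up to a null set; it follows from $X_t\to\infty$ almost surely, as you note, or directly from the maximal-inequality form, since $\sup_t M_t = \exp(-2a\inf_t X_t)$.
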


\begin{proof}
Use the fact that $M_t = \exp(-2a X_t)$ is a martingale and Doob's maximal inequality.
\end{proof}

\begin{proposition}\label{prop:bm-less-drift}
Suppose that $X_0 = 0$ and $\d{X}_t = \frac{a}{\exp(X_t) + 1} \d{t} + \d{B}_t$. Then
\begin{align*}
\bbP\left(\inf\{X_t : 0 \leq t \leq n\} \leq -\epsilon\right) \leq (1+\sqrt{n}/2) \exp\left(-\frac{2a\epsilon}{\exps+1}\right)\,.
\end{align*}
In particular, if $a \geq \frac{\exps+1}{2\epsilon} \log((1+\sqrt{n}/2)/\delta)$, then $\bbP(\inf\{X_t : 0 \leq t \leq n\} \leq-\epsilon) \leq \delta$. 
\end{proposition}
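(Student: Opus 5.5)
The plan is to compare $X$ with a drifted Brownian motion, discretising time into blocks so that inside each block the drift can be lower bounded by a constant. The key observation is that the drift $a/(\exp(X_t)+1)$ is decreasing in $X_t$. While $X_t \leq 1$ it is at least $a/(\exps+1)$. Excursions below $-\epsilon$ that start from $X = 0$ or below must therefore fight a drift of at least $a/(\exps+1)$ throughout.

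The cleanest way to exploit this is with an exponential supermartingale. Let $b = 2a/(\exps+1)$ and $f(x) = \exp(-b x)$. By Itô,
\begin{align*}
\d{f(X_t)} = f(X_t)\left[-b \frac{a}{\exp(X_t)+1} + \frac{b^2}{2}\right]\d{t} - b f(X_t)\d{B}_t \,.
\end{align*}
The drift is non-positive whenever $X_t \leq 1$. It can be positive when $X_t > 1$, but there $f(X_t) \leq \exp(-b)$ is small. So $f(X_t)$ is not a global supermartingale, and one has to deal with the portions of the path where $X_t > 1$.

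This is the main obstacle, and the factor $(1+\sqrt{n}/2)$ in the statement suggests counting the number of downcrossings of $X$ from level $1$ to level $0$. Concretely, I would proceed as follows.

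\begin{enumerate}
\item Define the stopping times $\rho_0 = 0$, $\tau_j = \inf\{t > \rho_{j-1} : X_t = 1\}$ and $\rho_j = \inf\{t > \tau_j : X_t = 0\}$.
\item On each interval $[\rho_j, \tau_{j+1}]$ the process stays below $1$. Stopping $f$ there gives a supermartingale, and optional stopping gives that the probability of hitting $-\epsilon$ before $1$ is at most $\exp(-b\epsilon)$. This is the same calculation as \cref{prop:bm-drift}, with the drift replaced by its lower bound $a/(\exps+1)$.
\item It remains to bound the expected number of cycles $N$ that start within $[0,n]$. Each cycle requires a descent from $1$ to $0$, which takes time. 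I would bound $\E[N] \leq 1 + \sqrt{n}/2$ by controlling the downcrossings of $[0,1]$ by $X$ on $[0,n]$. Since the drift is positive, downcrossings are no more frequent than for Brownian motion, and the expected number of downcrossings of an interval of width $1$ by a Brownian motion over time $n$ is of order $\sqrt{n}$, since it behaves like the local time divided by the width.
\item A union bound (Wald-type summation over cycles) then gives $(1+\E[N])\exp(-b\epsilon)$.
\end{enumerate}

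The constant $1/2$ would come out of the local-time or reflection computation, and I expect getting that exact constant to be fiddly. An alternative I would try first for step 3 is a comparison argument: apply Tanaka's formula to $(X_t - \tfrac12)^-$, or more simply use that each downcrossing contributes at least $1$ to the total variation of the martingale part, and bound via $\E[\sup|B|]$.

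The exponent in the statement is $2a\epsilon/(\exps+1)$, while step 2 uses $b\epsilon$ with $b = 2a/(\exps+1)$, so these match. The ``in particular'' clause then follows by rearranging $(1+\sqrt{n}/2)\exp(-2a\epsilon/(\exps+1)) \leq \delta$.
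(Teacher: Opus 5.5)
Your outline is the paper's proof. An exponential supermartingale bounds the chance of reaching $-\epsilon$ before $1$ from $0$ by $\exp(-2a\epsilon/(\exps+1))$. A union bound then runs over the $1+D_n$ excursions that start at $0$ by time $n$, where $D_n$ is the number of completed downcrossings of $[0,1]$. (So the factor in step 4 should be $1+\E[D_n]$, not $1+\E[N]$.)

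The gap is step 3, which the paper also asserts in one line. ``Positive drift, hence no more downcrossings than Brownian motion'' is not a valid comparison, because the drift also shortens the upward leg $0\to 1$ that must precede every downcrossing. Take drift $\lambda\sind_{\{x<0\}}$ with $\lambda\to\infty$: the process tends to reflected Brownian motion $|B|$. Each cycle $0\to\pm1\to0$ of $|B|$ is a downcrossing of $[0,1]$ by $B$ or by $-B$. So $|B|$ has exactly twice as many expected downcrossings as $B$, about $\sqrt{2n/\pi}\approx 0.8\sqrt{n}$. The drift in this proposition behaves the same way when $1/\sqrt{n}\ll a\ll 1$. On the diffusive scale $a/(\exp(x)+1)$ is essentially $a\sind_{\{x<0\}}$ with $a\sqrt{n}$ large, so $\E[D_n]$ is close to $\sqrt{2n/\pi}$. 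Hence $\E[D_n]\le\sqrt{n}/2$ is false for this very process. The statement still holds there only because the per-cycle estimate is then very loose.

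Your fallback ideas do not close the gap either. Brownian motion has infinite total variation. Also, $B$ does fall by at least $1$ across each downcrossing of $X$, but disjoint unit falls of $B$ on $[0,n]$ can number of order $n$, since the drift can do the climbing in between.

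A rigorous step 3 comes from working in natural scale. Let $s'(x)=\exp\bigl(-2\int_0^x \frac{a}{\exp(y)+1}\d{y}\bigr)$, which is decreasing because the drift is positive. Then $Y_t=s(X_t)$ is a local martingale with $\d{Y}_t=s'(X_t)\d{B}_t$. Downcrossings of $[0,1]$ by $X$ are exactly downcrossings of $[s(0),s(1)]$ by $Y$. Doob's downcrossing inequality (after localising) and It\^o's formula for $(y-s(1))_+^2$ give
\[
\bigl(s(1)-s(0)\bigr)\E[D_n]\le \E\bigl[(Y_n-s(1))^+\bigr]\le \Bigl(\E\int_0^n \sind_{\{X_t>1\}}\, s'(X_t)^2\d{t}\Bigr)^{1/2}\le s'(1)\sqrt{n}\,.
\]
Since $s(1)-s(0)=\int_0^1 s'\ge s'(1)$, this gives $\E[D_n]\le\sqrt{n}$.

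Better, bound the union $p(1+\E[D_n])$ using the exact per-cycle probability $p=(s(1)-s(0))/(s(1)-s(-\epsilon))$. In $p\,\E[D_n]$ the factor $s(1)-s(0)$ cancels. The inequality $s'(x)\ge s'(1)\exp(2a(1-x)/(\exps+1))$ for $x\le 1$ then yields the bound $(1+\sqrt{n})\exp(-2a\epsilon/(\exps+1))$. When $2a/(\exps+1)\ge 1.3$, which covers the use in \cref{lem:eta}, the same computation gives the stated factor $1+\sqrt{n}/2$.
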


\begin{proof}
Obviously the drift is positive everywhere and
when $X_t \in (-\infty, 1]$ the drift term is at least $a/(e+1)$.
Hence, letting $\tau = \inf\{t : X_t \in \{-\epsilon, 1\}\}$, it holds that
$\bbP(X_\tau =-\epsilon) \leq \exp(-2a\epsilon/(\exps+1))$.
Moreover, since $X_t$ has positive drift everywhere, the number of down-crossings $D_n$ of the interval $[0,1]$ is bounded in expectation by
the number of down-crossings of a Brownian motion in the same time interval, which by Doob's down-crossing lemma is at most
$\E[D_n] \leq \E[\max(0, B_n)] \leq \sqrt{n}/2$.
Therefore
$\bbP\left(\inf\{X_t : 0 \leq t \leq n\} \leq -\epsilon\right) 
\leq (1 + \sqrt{n}/2) \exp(-(2a\epsilon)(\exps+1))$.
\end{proof}

\section{Proof of Lemma~\ref{lem:bounds}}

Let $M_t = \exp(-\theta_{t,a})$. 
By \cref{lem:conserve} there exists a $b$ such that $\theta_{t,b} \geq 0$ and hence $M_t \leq \exp(\theta_{t,b} - \theta_{t,a}) = \pi_{t,b} / \pi_{t,a} \leq 1/\pi_{t,a}$.
Therefore, by It\^o's formula
\begin{align*}
\d{M_t} 
&= M_t \left[\eta \pi_{t,a} (\bar \mu_t - \mu_a) + \frac{\eta^2 \pi_{t,a}(1 - \pi_{t,a})}{2}\right] \d{t} + \eta M_t \sqrt{\pi_{t,a}(1-\pi_{t,a})} \d{W}_t \\
&\leq \left[\eta + \frac{\eta^2}{2}\right] \d{t} + \eta M_t \sqrt{\pi_{t,a}(1 - \pi_{t,a})} \d{W_t}\,.
\end{align*}
Therefore, since $M_0 = 1$, by Markov's inequality 
\begin{align*}
\bbP\left(\sup_{t \leq n} M_t \geq \frac{1}{\delta} + \frac{n}{\delta} \left[\eta + \frac{\eta^2}{2}\right]\right) \leq \delta \,,
\end{align*}
which after naively simplifying $\eta \leq c$ implies that with probability at least $1 - \delta$, $\theta_{t,a} \geq -\log\left(n/\delta\right)$. 

\section{Proof of Lemma~\ref{lem:pi}}

Suppose that $a \leq 0 \leq b$ and $K = \{x \in [a, b]^m : \sum_{i=1}^m x_i = 0\}$
and let $F$ be the set of measurable functions from $[0,m] \to [a, b]$ with $\int_0^m f(x) \d{x} = 0$. Then
\begin{align*}
\max_{x \in K} \norm{\exp(x)}_1 &\leq \max_{f \in F} \int_0^m \exp(f(x)) \d{x} = m \left[\frac{b}{b-a} \exp(a) - \frac{a}{b-a} \exp(b)\right]  \,.
\end{align*}
To reduce clutter we drop the time indices. 
Hence, by the previous display with $b = \theta_1 + 1$ and $a = -\log(n/\delta)$,
\begin{align*}
\norm{\exp(\theta)}_1
&\leq k\left[\frac{\theta_1+1}{\theta_1+1 + \log(n/\delta)} \left(\frac{\delta}{n}\right) + \frac{\log(n/\delta)}{\theta_1 + 1 + \log(n/\delta)} \exp(\theta_1+1)\right] \\
&\leq \frac{k\delta}{n} + \frac{k \log(n/\delta)}{\theta_1 + 1 + \log(n/\delta)} \exp(\theta_1  + 1) \,.
\end{align*}
Since $\theta_1 \geq \theta_a - 1$ and $\sum_{a=1}^k \theta_a = 0$ it follows that $\theta_1 \geq -1$.
Therefore
\begin{align*}
\frac{1}{\pi_1} = \norm{\exp(\theta)}_1 \exp(-\theta_1) \leq \exps\left[\frac{k\delta}{n} + \frac{k \log(n/\delta)}{\theta_1 + 1 + \log(n/\delta)}\right]\,.
\end{align*}
The result now follows by crudely simplifying the constants.

\section{Proof of Lemma~\ref{lem:diff}}

Let $\d{Z_{t,a}} = b_t \d{t} + \ip{\sigma_t, \d{B}_t}$.
By definition $\d{Z_{t,a}} = \ip{e_1 - e_a, \d{\theta}_t}$
Substituting the definition of $\d{\theta}_t$, the drift is
\begin{align*}
\ip{e_1 - e_a, (\diag(\pi_t) - \pi_t \pi_t^\top) \mu}
&= \pi_{t,1} (\mu_1 - \bar \mu_t) - \pi_{t,a} (\mu_a - \bar \mu_t) \\
&= \pi_{t,1} R_t - \pi_{t,a} (R_t - \Delta_a) \\
&= \pi_{t,a} \Delta_a + (\pi_{t,1} - \pi_{t,a}) R_t \,.
\end{align*}
Similarly, the diffusion coefficient is
$\eta \norm{e_1 - e_a}_{\diag(\pi_t) - \pi_t \pi_t^\top} = \eta \sqrt{\pi_{t,1} + \pi_{t,a} - (\pi_{t,1} - \pi_{t,a})^2}$.

\section{Proof of Proposition~\ref{prop:sde}}\label{app:sde}

The analysis of $(S_t)$ is straightforward.
We have
\begin{align*}
\d{S}_{t(s)} \leq \d{s} + \sqrt{\eta \pibad{t(s)}} \d{W_s}\,.
\end{align*}
Hence, $S_{t(s)} \leq s + \sqrt{\eta} W_s$ and by the maximal inequality, with probability at least $1 - \delta$ for all $t$ such that $s(t) \leq s_{\max}$,
\begin{align*}
S_{t(s)} \leq s + \sqrt{2\eta s_{\max} \log(1/\delta)} < s + 1\,.
\end{align*}
Similarly, $\d{S}_{t(s)} \geq (1 - \Dmin) \d{s} + \sqrt{\eta \pibad{t(s)}} \d{W_s}$ and with probability at least $1 - \delta$, for all $t$ such that $s(t) \leq s_{\max}$,
\begin{align*}
S_{t(s)} > (1 - \Dmin) s - 1\,.
\end{align*}
The last two parts involve $(Z_t)$ and are more challenging.

\begin{remark}[\textsc{method of integrating factors}]\label{rem:factors}
Remember that the linear SDE with $X_0 = 0$ and
$\d{X}_t = \left[A_t + B_t X_t\right] \d{t} + C_t \d{W}_t$ has the solution
\begin{align*}
X_t = \exps^{\beta(t)} \left[\int_0^t \exps^{-\beta(u)} A_u \d{u} + \int_0^t \exps^{-\beta(u)} C_u \d{W}_u\right]
\text{ with } \beta(t) = \int_0^t B_u \d{u} \,.
\end{align*}
\end{remark}

Recall the identities $\sinh'(x) = \cosh(x)$ and $\cosh'(x) = \sinh(x)$ and $\cosh(x) = \sqrt{1 + \sinh^2(x)}$.
Let $U_t = \sinh(Z_t/2)$.
By It\^o's formula,
\begin{align*}
\d{U}_t 
&\leq \frac{C_t}{2} \left[\Dmin(1 + 2G_t) \sqrt{1 + U_t^2} + U_t + \frac{\eta(1 + 4 G_t) U_t}{4}\right] \d{t} 
+ \frac{1}{2}\sqrt{\eta \sigma_t^2 (1 + U_t^2) C_t} \d{W}_t \,. 
\end{align*}
Let us start by upper bounding the drift term. Suppose that $t \leq \tau$. Then, $U_t \leq \sqrt{\eta}$.
Hence,
\begin{align*}
\textrm{Drift}_t 
&= \frac{C_t}{2} \left[\Dmin(1 + 2G_t) \sqrt{1 + U_t^2} + U_t + \frac{\eta(1 + 4 G_t) U_t}{4}\right] \\
&\exl{a}\leq \frac{C_t}{2} \left[\Dmin(1 + 2 G_t) (1 + \sqrt{2}) + U_t\left(1 - \Dmin(1 + 2 G_t)\sind_{U_t<0}\right) + \frac{\eta(1 + 4 G_t)}{4}\right] \\
&\exl{b}\leq \frac{C_t}{2} \left[6(\Dmin + \eta) (1 + G_t) + U_t\left(1 - \Dmin(1 + 2 G_t)\sind_{U_t<0}\right)\right] \\
&\exl{c}\leq \frac{C_t}{2} \left[6(\Dmin + \eta) (1 + G_t) + U_t\left(1 - \Dmin\left(1 + \frac{400}{m \sqrt{\eta}}\right) \sind_{U_t<0}\right)\right] \\
&\exl{d}\leq C_t \left[3\left(\Dmin + \eta + \epsilon \sqrt{\eta}\right) (1 + G_t)  + \frac{U_t\left(1 - \epsilon \right)}{2}\right] \\
&\exl{e}\leq C_t \left[\frac{\sqrt{\eta}}{96} (1 + G_t)  + \frac{U_t\left(1 - \epsilon \right)}{2}\right] \,. 
\end{align*}
where \ex{a} follows from the inequality $\sqrt{1+u^2} \leq 1 + \sqrt{2} - u \sind_{u < 0}$ for $u \leq 1$.
\ex{b} by direct simplification.
\ex{c} by the bound on $G_t$ below (\cref{eq:G2}).
\ex{d} since $\sqrt{\eta} \geq C \Dmin$ and $\Dmin \leq 1/m$ and by the definition of $\epsilon = 2/m$.
\ex{e} follows from the assumptions on $\eta$ and $m$.
Hence, by a comparison, there exists a coupling between $(U_{s(t)})$ and another process $(V_s)$ such that $U_{t} \leq V_{s(t)}$ where $V_0 = 0$ and
\begin{align*}
\d{V_s}
&= \left[3 (\Dmin + \eta) (1 + G_{t(s)}) + \frac{\epsilon \sqrt{\eta}}{2} + \frac{V_s\left(1 - \epsilon\right)}{2}\right] + \frac{1}{2}\sqrt{\eta \gamma_s^2} \d{W}_s \,,
\end{align*}
where $1 \leq \gamma_s^2 \leq (1 + 4 G_{t(s)})(1 + V_s^2)$.

\paragraph{Exact solution}
Recall now that
\begin{align}
G_t = \frac{1}{m} \exp\left(\frac{S_t + Z_t}{2} + \frac{S_t}{m}\right) 
\leq \frac{8}{m} \exp\left(s(t)/2\right)\,,
\label{eq:G1}
\end{align}
where we used \cref{def:tau} to argue that for $t \leq \tau$, $S_t \leq s(t) + 1$ and the assumption that $(s_{\max} + 1) / m \leq 1$.
Alternatively, $G_t$ can be bounded in another way for $t \leq \tau$ by
\begin{align}
G_t = \frac{1}{m} \exp\left(\frac{S_t + Z_t}{2} + \frac{S_t}{m}\right) 
&\leq \frac{200}{m \sqrt{\eta}} \,,
\label{eq:G2}
\end{align}
which again follows from \cref{def:tau} and \cref{eq:Z} and the assumption that $(s_{\max}+1)/m\leq 1$.
By the method of integrating factors (\cref{rem:factors}),
\begin{align}
V_s = \exps^{(1-\epsilon) s/2} \left[\frac{\sqrt{\eta}}{96} \int_0^s \exps^{-(1-\epsilon)u/2} (1 + G_{t(u)}) \d{u} + \sqrt{\eta} W_{[M]_s}\right]\,,
\label{eq:V}
\end{align}
where $[M]_s$ is the quadratic variation of the martingale
\begin{align*}
M_s = \frac{1}{2} \int_0^s \exps^{-(1-\epsilon)u/2} \gamma_u \d{W}_u\,.
\end{align*}
Let $\kappa = \inf\{s : V_s = 1 \text{ or } [M]_s = 3/4\}$.

\paragraph{Constant probability event}
By definition, $[M]_s \leq 3/2$ for $s \leq \kappa$.
Consider the event $E$ that $W_u + u \in [-1/8, 1/8]$ for all $u \leq 3/2$.
An elementary argument shows that $\bbP(E) > c$.
For the remainder we show the following hold on $E$ for all $s \leq s_{\max}$:
\begin{enumerate}
\item $[M]_s < 3/4$.
\item $V_s < 1$.
\item $V_s \leq -\frac{\sqrt{\eta}}{40} \exps^{(1-\epsilon)s/2}$.
\end{enumerate}
These imply that $\kappa = s_{\max}$ and in particular by our comparison to $(U_{s(t)})$ the claim holds.

\paragraph{Bounding the quadratic variation}
Note that by definition, if $s \leq \kappa$, then $V_s < 1$ and by \cref{eq:V},
\begin{align}
V_s^2 \leq \max\left(1, 4\eta \exps^{(1 - \epsilon)s}\right)\,,
\label{eq:Vs}
\end{align}
where we used the fact that when $s \leq \kappa$, then $[M]_s \leq 3/2$ and hence $W_{[M]_s} \geq -2$.
By definition the quadratic variation is
\begin{align}
[M]_s 
&= \frac{1}{4} \int_0^s \exps^{-(1 - \epsilon)u} \gamma_u^2 \d{u} \,. 
\label{eq:Ms}
\end{align}
Since all the terms are positive and $\gamma_u \geq 1$, it immediately follows that $[M]_s \geq (1 - \exps^{-s})/4$. 
Upper bounding is more delicate. Since $\gamma_t \in [0,4]$,
\begin{align*}
[M]_s
&\exl{a}\leq \frac{1}{4} \int_0^s \exps^{-(1-\epsilon)u} (1 + 4 G_{t(u)})(1+V_u^2) \d{u} \\
&\exl{b}\leq \frac{1}{4}\int_0^s \exps^{-(1-\epsilon)u} (1 + 4 G_{t(u)})(2 + 4\eta \exps^{(1-\epsilon) u}) \d{u} \\
&\exl{c}\leq \frac{1}{4} \int_0^s \exps^{-(1-\epsilon)u} \left(1 + \frac{800}{m} \min\left(\frac{1}{\sqrt{\eta}}, \exps^{u/2}\right)\right) \left(2 + 4\eta \exps^{(1-\epsilon) u}\right) \d{u} \\
&\exl{d}\leq \frac{1- \exps^{-(1-\epsilon)s}}{2(1-\epsilon)} + 1/8 \\
&\exl{e}\leq 3/4 \,.
\end{align*}
where \ex{a} follows from \cref{eq:Ms},
\ex{b} from \cref{eq:Vs},
\ex{c} from \cref{eq:G1} and \cref{eq:G2},
\ex{d} by computing the integral and naive bounding and \ex{e} is immediate.
Therefore $(1 - \exps^{-s})/2 \leq [M]_s \leq 3/2$ for all $s \leq s_{\max}$.

\paragraph{Bounding the SDE}
Our exact solution is
\begin{align*}
V_s 
&= \exps^{(1-\epsilon) s/2} \left[\frac{\sqrt{\eta}}{96}\int_0^s \exps^{-(1-\epsilon)u/2} (1 + G_{t(u)}) \d{u} + \sqrt{\eta} W_{[M]_s}\right] \\
&\exl{a}\leq \exps^{(1-\epsilon) s/2} \left[\frac{\sqrt{\eta}}{96} \int_0^s \exps^{-(1-\epsilon)u/2} \left(1 + \frac{8}{m} \exps^{u/2}\right) \d{u} + \sqrt{\eta} W_{[M]_s}\right] \\
&\exl{b}\leq \exps^{(1-\epsilon) s/2} \left[\frac{\sqrt{\eta}}{96} \int_0^s \exps^{-u/2 + \epsilon s_{\max}/2} \left(1 + \frac{8}{m} \exps^{u/2}\right) \d{u} + \sqrt{\eta} W_{[M]_s}\right] \\
&\exl{c}\leq \exps^{(1-\epsilon) s/2} \left[\frac{\sqrt{\eta}}{48} \int_0^s \exps^{-u/2} \left(1 + \frac{8}{m} \exps^{u/2}\right) \d{u} + \sqrt{\eta} W_{[M]_s}\right] \\
&\exl{d}\leq \exps^{(1-\epsilon) s/2} \left[\frac{\sqrt{\eta}}{48} \left(2 + \frac{8 s_{\max}}{m}\right) + \sqrt{\eta} W_{[M]_s}\right] \\
&\exl{e}\leq \exps^{(1-\epsilon) s/2} \left[\frac{\sqrt{\eta}}{16} + \sqrt{\eta} W_{[M]_s}\right] \\
&\exl{f}\leq \exps^{(1-\epsilon) s/2} \left[\frac{\sqrt{\eta}}{16} + \sqrt{\eta} (\exps^{-s}/4 - 1/8)\right] \\
&\exl{g}< \exps^{(1-\epsilon) s/2} \sqrt{\eta} (\exps^{-s}/4 - 1/16)\,. 
\end{align*}
where \ex{a} follows from \cref{eq:G2}.
\ex{b} since $s \leq s_{\max}$.
\ex{c} since $\exp(\epsilon s_{\max}/2) \leq 2$.
\ex{d} by crudely bounding the integral.
\ex{e} since $m \geq 8 s_{\max}$.
\ex{f} since $W_u \leq 1/8 - u$ and $[M]_s \geq (1 - \exps^{-s})/4$. 
\ex{g} since $\eta < c$ and $\sqrt{\eta} \geq C \Dmin$.
Hence, with constant probability, whenever $t \leq \tau$, then $S_{t(s)} \in ((1 - \Dmin)s - 1, s + 1)$ and
\begin{align*}
Z_t = 2 \arcsinh(U_t) \leq 2 \arcsinh(V_{s(t)}) < 2\arcsinh\left(\exps^{(1-\epsilon)s/2} \sqrt{\eta} (\exps^{-s}/4 - 1/16)\right)\,,
\end{align*}
which completes the proof.

\newpage
\section{Experiments}\label{app:exp}

We collect here the experiments in the lower bound construction in \cref{thm:lower} but using discrete time policy gradient as outlined in \cref{alg:pg}.

\begin{figure}[h!]
\centering
\includegraphics[width=13cm]{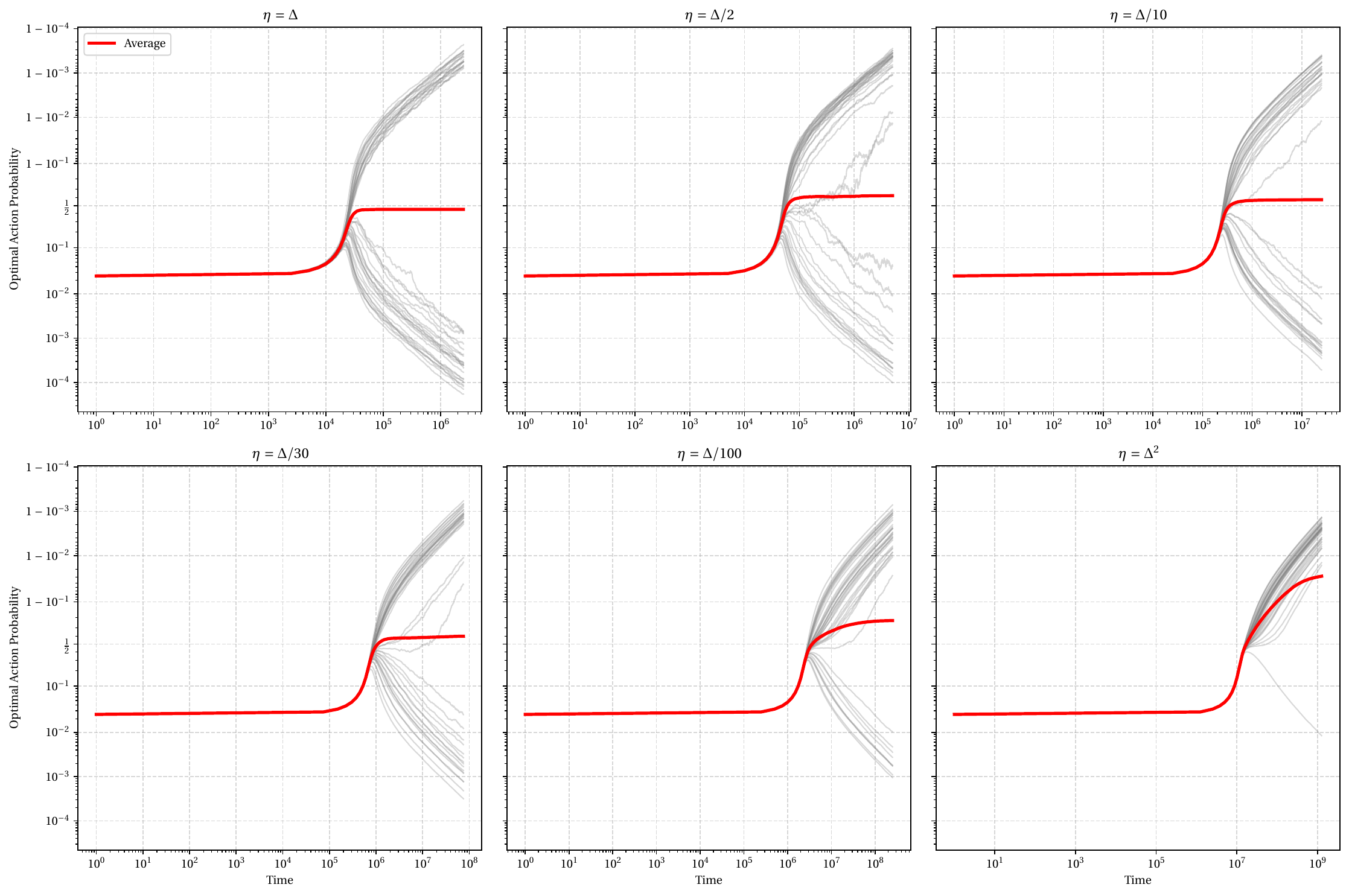}
\caption{
The plot shows 40 trajectories of $\pi_{t,1}$ produced by \cref{alg:pg} on the instance of \cref{thm:lower} for 6 different learning rates with $\Dmin = 0.002$.
}\label{fig:lower}
\end{figure}

\begin{figure}[h!]
\centering
\includegraphics[width=13cm]{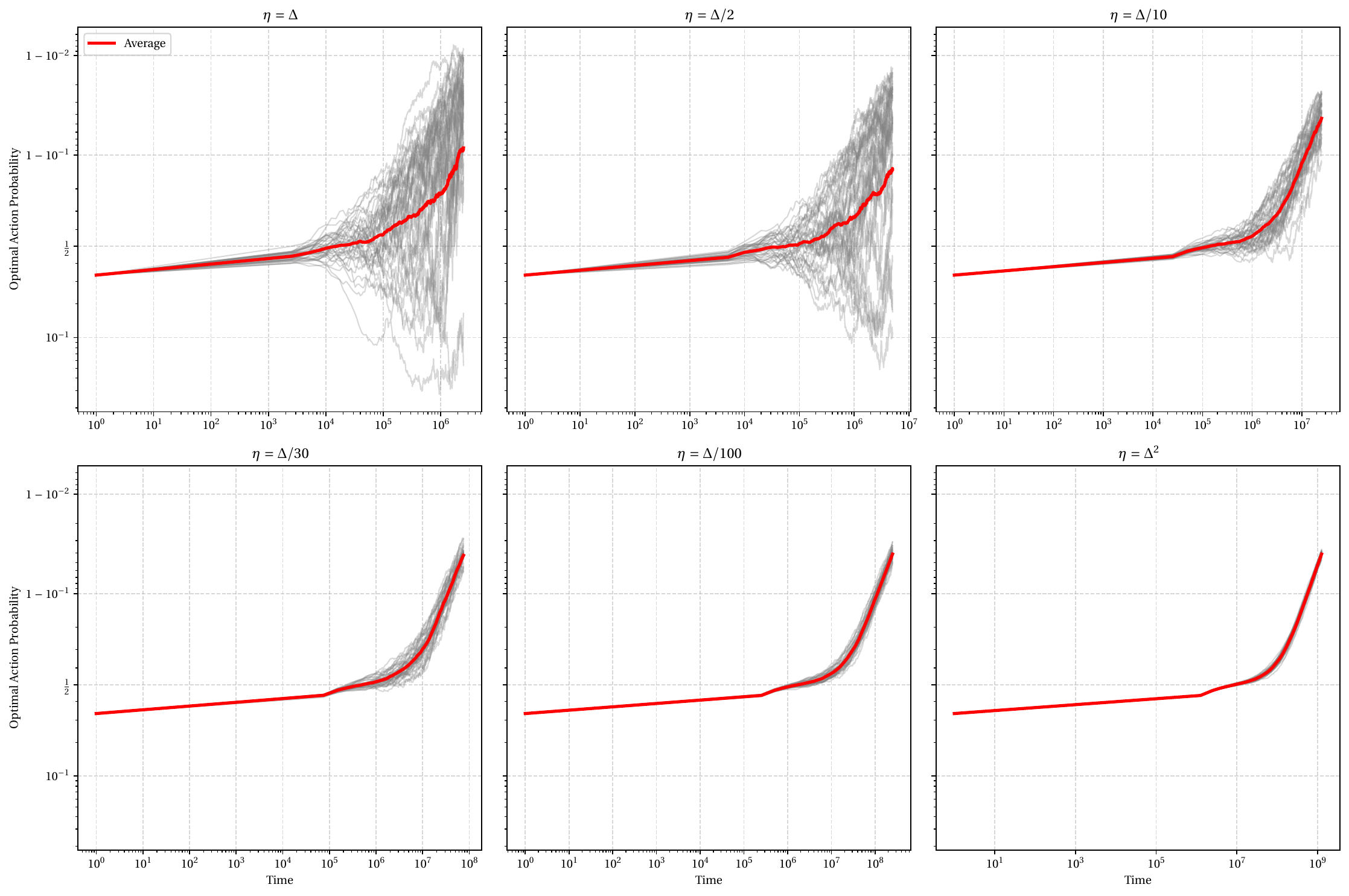}
\caption{
The figure shows the results for the same experiment as in \cref{fig:lower} but with $k = 3$, suggesting that the logarithmic number of arms used in the lower bound is needed.
}\label{fig:lower3}
\end{figure}

\end{document}